\newtheorem{theorem}{Theorem}
\definecolor{mydarkblue}{rgb}{0,0.08,0.45}
\definecolor{myblue}{HTML}{3b75c3}
\definecolor{myred}{HTML}{E33222}
\definecolor{mygreen}{HTML}{438773}
\definecolor{mymaroon}{RGB}{142,27,19}
\definecolor{maroon}{HTML}{800000}
\definecolor{mycite}{cmyk}{0.55,1,0,0.15}
\definecolor{codeblue}{rgb}{0.25,0.5,0.5}
\definecolor{codekw}{rgb}{0.85, 0.18, 0.50}
\definecolor{codegreen}{rgb}{0,0.6,0}
\definecolor{codegray}{rgb}{0.5,0.5,0.5}
\definecolor{codepurple}{rgb}{0.58,0,0.82}
\definecolor{backcolour}{rgb}{0.95,0.95,0.92}
\title{Haste Makes Waste: A Simple Approach for Scaling
Graph Neural Networks}
\author{
    Rui Xue\textsuperscript{\rm 1}, Tong Zhao\textsuperscript{\rm 2}, Neil Shah\textsuperscript{\rm 2}, Xiaorui Liu\textsuperscript{\rm 1}
}
\begin{document}


\newtheorem{algo}{Algorithm}

\newcommand{\overbar}[1]{\mkern 1.5mu\overline{\mkern-1.5mu#1\mkern-1.5mu}\mkern 1.5mu}

\newcommand{\outline}[1]{{\color{brown}#1}}
\newcommand{\rev}[1]{{\color{blue}#1}}
\newcommand{\remove}[1]{{\sout{#1}}}
\newcommand{\cut}[1]{{}}
\newcommand{\rui}[1]{{\color{red}#1}} 
\newcommand{\blue}[1]{{\color{blue}#1}} 
\newcommand{\ns}[1]{{\color{purple}\textbf{NS:} #1}} 

\newcommand{\tA}{{\tilde{\vA}}}
\newcommand{\tD}{{\tilde{\vD}}}
\newcommand{\tL}{{\tilde{\vL}}}
\newcommand{\hA}{{\hat{\vA}}}
\newcommand{\hD}{{\hat{\vD}}}
\newcommand{\hL}{{\hat{\vL}}}
\newcommand{\tDelta}{{\tilde{\Delta}}}
\newcommand{\Xin}{{\vX_{\text{in}}}}
\newcommand{\Xinit}{{\vX_{\text{init}}}}
\newcommand{\tin}{{\text{in}}}
\newcommand{\tinit}{{\text{init}}}
\newcommand{\grad}{{\text{grad}}}
\newcommand{\fea}{{\text{fea}}}
\newcommand\norm[1]{\left\lVert#1\right\rVert}

\newcommand{\mr}[2]{\multirow{#1}{*}{#2}}
\newcommand{\mc}[3]{\multicolumn{#1}{#2}{#3}}


\newcommand{\va}{{\mathbf{a}}}
\newcommand{\vb}{{\mathbf{b}}}
\newcommand{\vc}{{\mathbf{c}}}
\newcommand{\vd}{{\mathbf{d}}}
\newcommand{\ve}{{\mathbf{e}}}
\newcommand{\vf}{{\mathbf{f}}}
\newcommand{\vg}{{\mathbf{g}}}
\newcommand{\vh}{{\mathbf{h}}}
\newcommand{\vi}{{\mathbf{i}}}
\newcommand{\vj}{{\mathbf{j}}}
\newcommand{\vk}{{\mathbf{k}}}
\newcommand{\vl}{{\mathbf{l}}}
\newcommand{\vm}{{\mathbf{m}}}
\newcommand{\vn}{{\mathbf{n}}}
\newcommand{\vo}{{\mathbf{o}}}
\newcommand{\vp}{{\mathbf{p}}}
\newcommand{\vq}{{\mathbf{q}}}
\newcommand{\vr}{{\mathbf{r}}}
\newcommand{\vs}{{\mathbf{s}}}
\newcommand{\vt}{{\mathbf{t}}}
\newcommand{\vu}{{\mathbf{u}}}
\newcommand{\vv}{{\mathbf{v}}}
\newcommand{\vw}{{\mathbf{w}}}
\newcommand{\vx}{{\mathbf{x}}}
\newcommand{\vy}{{\mathbf{y}}}
\newcommand{\vz}{{\mathbf{z}}}

\newcommand{\vA}{{\mathbf{A}}}
\newcommand{\vB}{{\mathbf{B}}}
\newcommand{\vC}{{\mathbf{C}}}
\newcommand{\vD}{{\mathbf{D}}}
\newcommand{\vE}{{\mathbf{E}}}
\newcommand{\vF}{{\mathbf{F}}}
\newcommand{\vG}{{\mathbf{G}}}
\newcommand{\vH}{{\mathbf{H}}}
\newcommand{\vI}{{\mathbf{I}}}
\newcommand{\vJ}{{\mathbf{J}}}
\newcommand{\vK}{{\mathbf{K}}}
\newcommand{\vL}{{\mathbf{L}}}
\newcommand{\vM}{{\mathbf{M}}}
\newcommand{\vN}{{\mathbf{N}}}
\newcommand{\vO}{{\mathbf{O}}}
\newcommand{\vP}{{\mathbf{P}}}
\newcommand{\vQ}{{\mathbf{Q}}}
\newcommand{\vR}{{\mathbf{R}}}
\newcommand{\vS}{{\mathbf{S}}}
\newcommand{\vT}{{\mathbf{T}}}
\newcommand{\vU}{{\mathbf{U}}}
\newcommand{\vV}{{\mathbf{V}}}
\newcommand{\vW}{{\mathbf{W}}}
\newcommand{\vX}{{\mathbf{X}}}
\newcommand{\vY}{{\mathbf{Y}}}
\newcommand{\vZ}{{\mathbf{Z}}}

\newcommand{\cA}{{\mathcal{A}}}
\newcommand{\cB}{{\mathcal{B}}}
\newcommand{\cC}{{\mathcal{C}}}
\newcommand{\cD}{{\mathcal{D}}}
\newcommand{\cE}{{\mathcal{E}}}
\newcommand{\cF}{{\mathcal{F}}}
\newcommand{\cG}{{\mathcal{G}}}
\newcommand{\cH}{{\mathcal{H}}}
\newcommand{\cI}{{\mathcal{I}}}
\newcommand{\cJ}{{\mathcal{J}}}
\newcommand{\cK}{{\mathcal{K}}}
\newcommand{\cL}{{\mathcal{L}}}
\newcommand{\cM}{{\mathcal{M}}}
\newcommand{\cN}{{\mathcal{N}}}
\newcommand{\cO}{{\mathcal{O}}}
\newcommand{\cP}{{\mathcal{P}}}
\newcommand{\cQ}{{\mathcal{Q}}}
\newcommand{\cR}{{\mathcal{R}}}
\newcommand{\cS}{{\mathcal{S}}}
\newcommand{\cT}{{\mathcal{T}}}
\newcommand{\cU}{{\mathcal{U}}}
\newcommand{\cV}{{\mathcal{V}}}
\newcommand{\cW}{{\mathcal{W}}}
\newcommand{\cX}{{\mathcal{X}}}
\newcommand{\cY}{{\mathcal{Y}}}
\newcommand{\cZ}{{\mathcal{Z}}}

\newcommand{\ri}{{\mathrm{i}}}
\newcommand{\rr}{{\mathrm{r}}}

\newcommand{\EE}{{\mathbb{E}}}


\newcommand{\RR}{\mathbb{R}}
\newcommand{\CC}{\mathbb{C}}
\newcommand{\ZZ}{\mathbb{Z}}
\renewcommand{\SS}{{\mathbb{S}}}
\newcommand{\SSp}{\mathbb{S}_{+}}
\newcommand{\SSpp}{\mathbb{S}_{++}}
\newcommand{\sign}{\mathrm{sign}}
\newcommand{\vzero}{\mathbf{0}}
\newcommand{\vone}{{\mathbf{1}}}

\newcommand{\st}{{\text{s.t.}}} 
\newcommand{\St}{{\mathrm{subject~to}}} 
\newcommand{\op}{{\mathrm{op}}} 
\newcommand{\opt}{{\mathrm{opt}}} 
\newcommand{\Prob}{{\mathrm{Prob}}} 
\newcommand{\Diag}{{\mathrm{Diag}}} 
\newcommand{\dom}{{\mathrm{dom}}} 
\newcommand{\range}{{\mathbf{range}}} 
\newcommand{\tr}{{\mathrm{tr}}} 
\newcommand{\TV}{{\mathrm{TV}}} 
\newcommand{\Proj}{{\mathrm{Proj}}}
\newcommand{\prj}{{\mathrm{prj}}}
\newcommand{\prox}{\mathbf{prox}}
\newcommand{\refl}{\mathbf{refl}}
\newcommand{\reflh}{\refl^{\bH}}
\newcommand{\proxh}{\prox^{\bH}}
\newcommand{\minimize}{\text{minimize}}
\newcommand{\bgamma}{\boldsymbol{\gamma}}
\newcommand{\bsigma}{\boldsymbol{\sigma}}
\newcommand{\bomega}{\boldsymbol{\omega}}
\newcommand{\blambda}{\boldsymbol{\lambda}}
\newcommand{\bH}{\vH}
\newcommand{\bbH}{\mathbb{H}}
\newcommand{\bB}{\boldsymbol{\cB}}
\newcommand{\Tau}{\mathrm{T}}
\newcommand{\tnabla}{\widetilde{\nabla}}
\newcommand{\TDRS}{T_{\mathrm{DRS}}}
\newcommand{\TPRS}{T_{\mathrm{PRS}}}
\newcommand{\TFBS}{T_{\mathrm{FBS}}}
\newcommand{\best}{\mathrm{best}}
\newcommand{\kbest}{k_{\best}}
\newcommand{\diff}{\mathrm{diff}}
\newcommand{\barx}{\bar{x}}
\newcommand{\xgbar}{\bar{x}_g}
\newcommand{\xfbar}{\bar{x}_f}
\newcommand{\hatxi}{\hat{\xi}}
\newcommand{\xg}{x_g}
\newcommand{\xf}{x_f}
\newcommand{\du}{\mathrm{d}u}
\newcommand{\dy}{\mathrm{d}y}
\newcommand{\kconvergence}{\stackrel{k \rightarrow \infty}{\rightarrow }}
\newcommand{\Null}{\mathbf{Null}}
\newcommand{\Span}{\mathbf{span}}

\makeatletter
\let\@@span\span
\def\sp@n{\@@span\omit\advance\@multicnt\m@ne}
\makeatother



\newcommand{\MyFigure}[1]{../fig/#1}

\newcommand{\bc}{\begin{center}}
\newcommand{\ec}{\end{center}}

\newcommand{\bdm}{\begin{displaymath}}
\newcommand{\edm}{\end{displaymath}}

\newcommand{\beq}{\begin{equation}}
\newcommand{\eeq}{\end{equation}}

\newcommand{\bfl}{\begin{flushleft}}
\newcommand{\efl}{\end{flushleft}}

\newcommand{\bt}{\begin{tabbing}}
\newcommand{\et}{\end{tabbing}}

\newcommand{\beqn}{\begin{align}}
\newcommand{\eeqn}{\end{align}}

\newcommand{\beqs}{\begin{align*}} 
\newcommand{\eeqs}{\end{align*}}  



\newcommand{\Ker}{\mathbf{Ker}}
\newcommand{\Range}{\mathbf{Range}}

\maketitle

\begin{abstract}

Graph neural networks (GNNs) have demonstrated remarkable success in graph representation learning,
and various sampling approaches have been proposed to scale GNNs to applications with large-scale graphs.
A class of promising GNN training algorithms take advantage of historical embeddings to reduce the computation and memory cost while maintaining the model expressiveness of GNNs. However, they incur significant computation bias due to the stale feature history. In this paper, we provide a comprehensive analysis of their staleness and inferior performance on large-scale problems. Motivated by our discoveries, we propose a simple yet highly effective training algorithm (REST) to effectively reduce feature staleness, which leads to significantly improved performance and convergence across varying batch sizes. The proposed algorithm seamlessly integrates with existing solutions, boasting easy implementation, while comprehensive experiments underscore its superior performance and efficiency on large-scale benchmarks. Specifically, our improvements to state-of-the-art historical embedding methods result in a 2.7\% and 3.6\% performance enhancement on the ogbn-papers100M and ogbn-products dataset respectively, accompanied by notably accelerated convergence.

\end{abstract}

\section{1. Introduction and Related Works}
\label{intro}

Graph neural networks (GNNs) have emerged as powerful tools for representation learning on graph-structured data~\cite{hamilton2020graph,ma2021deep}. They exhibit significant advantages across various general graph learning tasks, including node classification, link prediction, and graph classification~\cite{kipf2016semi,gasteiger2018combining,velivckovic2017graph,wu2019simplifying}. GNNs have also proven to be highly effective in various applications, such as recommendation systems, biological molecules, and transportation~\cite{tang2020knowing, sankar2021graph,fout2017protein,wu2022graph}. However, scalability becomes a dominant bottleneck when applying GNNs to large-scale graphs. This is because the recursive feature propagations in GNNs lead to the notorious neighborhood explosion problem since the number of neighbors involved in the mini-batch computation grows exponentially with the number of GNN layers~\cite{hamilton2017inductive, chen2018fastgcn, han2023mlpinit}. This is particularly undesirable for deeper GNNs that try to capture long-range dependency on large graphs using more feature propagation layers.
This neighborhood explosion problem poses a significant challenge as it cannot be accommodated within the limited GPU memory and computation resources during training and inference, which hampers the expressive power of GNNs and limits their applications to large-scale graphs.

Existing works have made significant contributions to address this issue from various perspectives, such as sampling, distributed computing, and pre-computing or post-computing approaches. In this paper, we concentrate on sampling approaches.
In particular, various sampling approaches have been proposed to mitigate the neighborhood explosion problem in large-scale GNNs~\cite{hamilton2017inductive,chen2018fastgcn,chiang2019cluster,Zeng2020GraphSAINT}. These sampling approaches attempt to reduce the graph size by sampling nodes and edges to lower the memory and computation costs in each mini-batch iteration. However, they also introduce estimation variance of embedding approximation in the sampling process, and they inevitably lose accurate graph information. 

Historical embedding methods have emerged as promising solutions to address this issue, such as VR-GCN~\cite{chen2017stochastic}, MVS-GCN~\cite{cong2020minimal}, GAS~\cite{fey2021gnnautoscale}, and GraphFM~\cite{yu2022graphfm}. They propose to reduce the estimation variance of sampling methods using historical embeddings. Specifically, during each training iteration, they preserve intermediate node embeddings at each GNN layer as historical embeddings, which are then used to reduce estimation variance in future iterations.
The historical embeddings can be stored offline on CPU memory or disk and will be used to fill in the inter-dependency from nodes out of the current mini-batch.
Therefore, it does not ignore any nodes or edges, which reduces variance and keeps the expressivity properties of the backbone GNNs with strong scalability and efficiency. Please refer Appendix M for a detailed summary of related works.

Despite their promising scalability and efficiency, in this paper, we discover that all historical embedding methods \emph{can not} consistently outperform vanilla sampling methods such as GraphSAGE which do not utilize historical embeddings, especially on larger graph datasets. Moreover, their prediction performance and convergence suffer from significant degradation when the batch size decreases. Although some existing works have attempted basic analyses \cite{huang2023refresh, wang2024stalenessbased}, these efforts are quite limited, and the underlying reasons remain unclear. In this paper, we provide the first comprehensive analysis, incorporating both theoretical and empirical studies, to reveal the emergence of significant embedding staleness with these methods and its severe negative impact on GNN training.
Our preliminary study reveals that the primary obstacle lies in the fact that these stored historical embeddings are not computed by the most recent model parameters, leading to a phenomenon known as staleness. Staleness represents the approximation error between the true embeddings computed using the most recent model parameters and the stale embeddings cached in the memory. This issue is pervasive across all historical embedding methods and significantly impacts training convergence and model performance, particularly when models undergo frequent updates—such as training GNNs on large-scale graphs with smaller batch sizes. Due to the substantial bias introduced by stale embeddings, these approaches cannot fully realize their potential in performance, despite their excellent efficiency and scalability.

Motivated by our findings and analysis, we propose a simple yet highly effective solution to reduce feature staleness by
decoupling the forward and backward phases and dynamically adjusts their execution frequencies, allowing the memory table to be updated more frequently than the model parameters. Our aim is to alleviate the current bottleneck on performance and convergence while preserving exceptional efficiency. Our proposed framework is highly flexible, orthogonal and compatible with any sampling methods, memory-based models, and various GNN backbones. 
Comprehensive experiments demonstrate its superior prediction performance and its ability to accelerate convergence while maintaining or excelling in running time and memory usage. Specifically, our enhancements to state-of-the-art historical embedding methods yield a 2.7\% and 3.6\% performance boost on the ogbn-papers100m and ogbn-products dataset respectively. Notably, these improvements are achieved in a much shorter convergence time.
\section{2. Preliminary Study}
\label{sec:prelim}

In this section, we illustrate that the staleness of historical embeddings serves as the bottleneck for existing historical embedding approaches when handling large-scale graphs. We aim to support this claim with empirical studies on prediction performance, training convergence, memory persistence, and approximation errors. 

\subsection{2.1 Message Passing with Historical Embeddings}
\textbf{Formulations.}
Let $h_v^{l}$ represent the feature embedding of the in-batch node $v$ in $l$-th layer and $f_\theta^{(l)}$ denote the message passing update in a $l$-th layer with parameter $\theta$. The standard sampling method for a mini-batch $B\subset\cV$ can be expressed as follows:
\begin{align}
h_v^{(l+1)} = f_\theta^{(l+1)}(h_v^{l}, [h_u^{l}]_{u\in \cN(v)\cap B})
\end{align}
where $\cN(v)\cap B$ is the in-batch 1-hop neighborhood of in-batch node $v$. 
As discussed in the introduction, the sampling methods (e.g., GraphSAGE, FastGCN) drop all 
out-of-batch neighbors $\cN(v)\textbackslash B$ and cannot aggregate their embeddings $[h_{u}^{l}]_{u\in \cN(v)\textbackslash B}$, which results in high estimation variance.
To address this issue, historical embedding methods use historical embeddings $[\Bar{h}_u^{l}]_{u\in N(v)\setminus B}$ to approximate $[h_u^{l}]_{u\in N(v)\setminus B}$. For example, the message passing in GAS~\cite{fey2021gnnautoscale} can be denoted as:
\begin{align}
h_v^{(l+1)} &= f_\theta^{(l+1)}(h_v^{l}, [h_u^{l}]_{u\in \cN(v)})\\
&= f_\theta^{(l+1)}(h_v^{l}, \underbrace{[h_u^{l}]_{u\in \cN(v)\cap B}}_{\text{in-batch neighbors}}\cup\underbrace{[h_u^{l}]_{u\in \cN(v)\textbackslash B}}_{\text{out-of-batch neighbors}})\\
&\approx f_\theta^{(l+1)}(h_v^{l}, \underbrace{[h_u^{l}]_{u\in \cN(v)\cap B}}_{\text{in-batch neighbors}}\cup\underbrace{[\Bar{h}_u^{l}]_{u\in \cN(v)\textbackslash B})}_{\text{historical embeddings}},
\end{align}
\vspace{-0.1in}

\noindent followed by feature memory update $\Bar{h}_v^{l+1} = h_v^{l+1}$. 
It is evident that historical embedding methods effectively reduce the estimation variance of $h_v^{(l+1)}$ without further expansion on the neighbor size. However, these historical embedding methods also incur larger estimation bias due to the approximation using historical embeddings.

\subsection{2.2 Empirical Study}
\label{sec:empirical}
\textbf{Inferior performance.}
While historical embedding methods exhibit both strong performance and scalability on small-scale and medium-scale graph datasets, they can not consistently outperform simpler variants such as GraphSAGE, Cluster-GCN, and GraphSAINT on larger datasets such as Reddit and ogbn-products as shown in multiple works~\cite{fey2021gnnautoscale, yu2022graphfm, lazygnn2023}. 
The bitter truth is that their prediction performance can be significantly inferior in many cases even though they cost much more offline memory to store hidden features in all intermediate layers. For example, as one of the state-of-the-art historical embedding methods, GAS (GCNII) achieves only 77.2\% accuracy on ogbn-products while GraphSAGE and GraphSAINT achieve 78.70\% and 79.08\%. The potential of historical embedding methods is significantly limited. However, their fundamental limitations of inferior performance are yet unclear.
Next, we will provide a comprehensive analysis from the feature staleness perspective.

\noindent\textbf{Convergence analysis.} 
We present a deeper empirical convergence analysis to reveal the performance bottleneck of historical embedding methods. 
To underscore this point, we present convergence curves comparing GraphSAGE, two representative historical embedding methods, GAS and GraphFM with GCN as backbone on ogbn-arxiv and ogbn-products. These comparisons are conducted under identical hyperparameter settings. 
``Small'' and ``Large'' denote the small and large batch sizes settings, respectively. From Figure~\ref{fig:conv}, it is apparent that GraphSAGE exhibits slower convergence and poorer performance on small dataset ogbn-arxiv with larger batch size. However, GraphSAGE converges more rapidly and achieves better performance in all other cases. In particular, when using small batch sizes or larger graphs, model updates occur more frequently within a single epoch. Consequently, staleness becomes significant and exerts a dominant influence on performance. Despite GraphFM’s utilization of current one hop neighbors to mitigate staleness, its impact is limited. This observation supports our earlier conclusion that historical embedding methods face limitations in realizing their full potential due to the constraints imposed by staleness.

\vspace{-0.1in}
\begin{figure}[h]
    \centering
    \vspace{-0.1in}
    \hspace{-0.3in}
    \subfigure[GAS-ogbn-arxiv]{
        \includegraphics[width=0.22\textwidth]{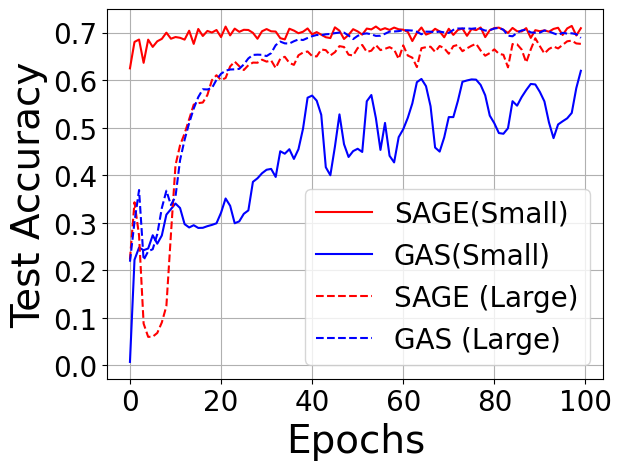}
    }
    \hspace{-0.1in}
    \subfigure[GAS-ogbn-products]{
        \includegraphics[width=0.22\textwidth]{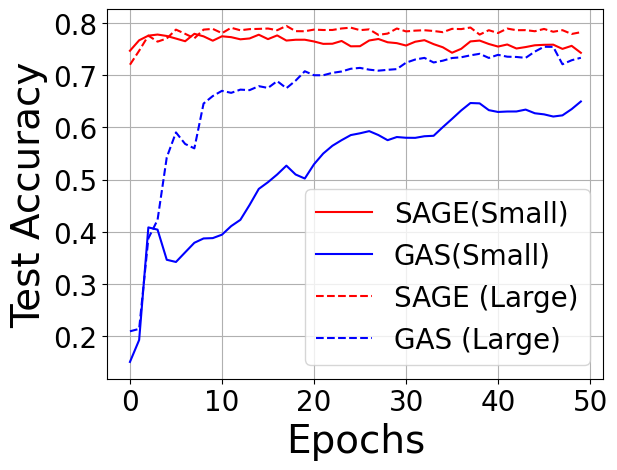}
    }\\
    \hspace{-0.8in}
    \subfigure[FM-ogbn-arxiv]{
        \includegraphics[width=0.22\textwidth]{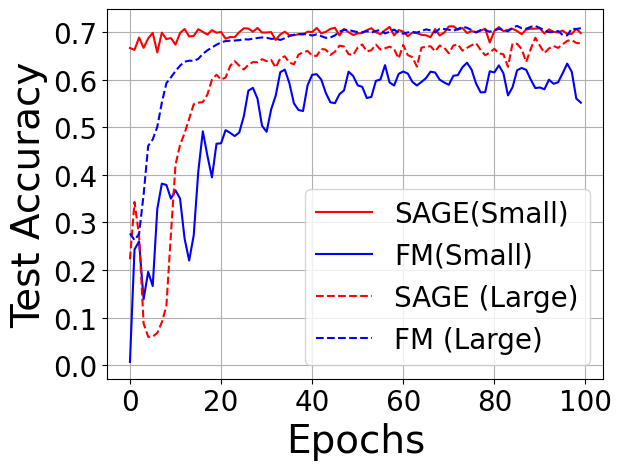}
    }
    \hspace{-0.1in}
    \subfigure[FM-ogbn-products]{
        \includegraphics[width=0.22\textwidth]{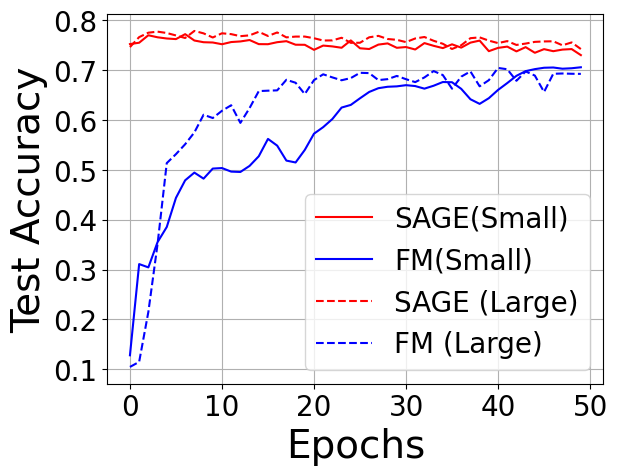}
    }
    \hspace{-0.8in}
    \vspace{-0.1in}
    \caption{GAS and FM exhibit inferior performance and slower convergence, especially on larger datasets (i.e., ogbn-products) or small batch size (i.e., ``Small'').}
    \label{fig:conv}
    \vspace{-0.1in}
\end{figure}
\vspace{-0.1in}

\noindent\textbf{Memory persistence.} 
To clearly illustrate staleness, we introduce a new notation of ``memory persistence'' for each node embedding to denote how long the historical embedding in the memory persists before being updated using the measurement of model update frequency.
A large memory persistency means it has not been updated frequently, which may lead to stronger feature staleness. We present the average memory persistence over all nodes in GAS~\cite{fey2021gnnautoscale} on ogbn-arxiv and ogbn-products datasets with varying batch sizes in Figure~\ref{fig:staleness_score}. It can be observed that smaller batch size incurs larger memory persistence. In particular, when employing a batch size of 1024 on ogbn-products, the model experiences around 2400 updates before updating the historical embedding, which results in a considerable persistence of stale features. 

\noindent\textbf{Embedding approximation errors.}
We present Figure~\ref{fig:embed} to show the approximation error of the embeddings between GAS and full batch $||\Tilde{h}^{(L)}_v - h^{(L)}_v||$ on ogbn-arxiv and ogbn-products. We also provide the similar study on GraphFM in Appendix I.

As shown in Figure~\ref{fig:embed}, the embeddings in the final layer gradually diverge from the true embeddings computed using full-batch data as the model parameters are updated through stochastic gradient descent. Consequently, the approximation errors accumulate and grow over epochs due to the accumulated staleness of the historical embeddings. These empirical analyses demonstrate that historical embedding methods such as GAS are significantly influenced by the quality of the historical embeddings utilized for message passing.

\begin{figure*}[!ht]
  \begin{minipage}[t]{0.55\textwidth}
    \centering
    \hspace{-0.5in}
    \includegraphics[width=0.44\textwidth]{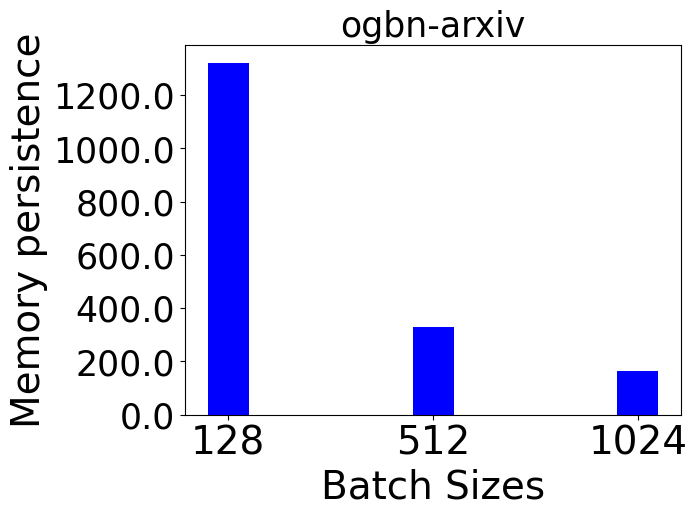}
    \hspace{-0.1in}
    \includegraphics[width=0.44\textwidth]{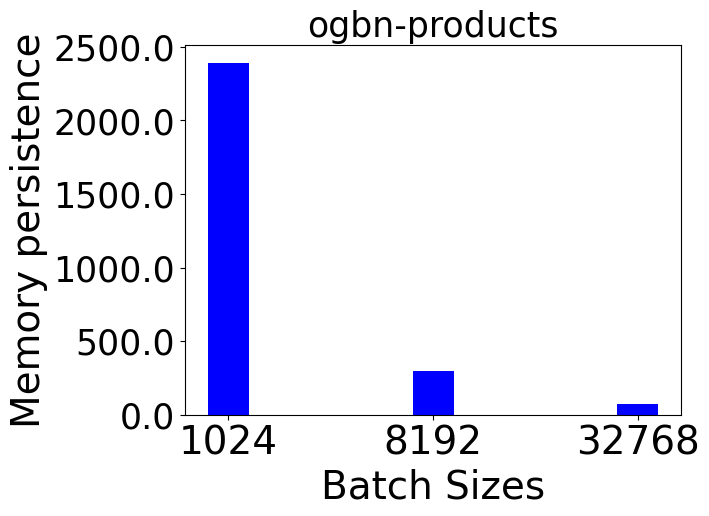}
    \vspace{-0.1in}
    \captionsetup{justification=centering,margin={-0.1in,0in}}
    \caption{Embedding Memory persistence (GAS).} 
    \label{fig:staleness_score}
  \end{minipage}
  \hspace{-1.3in}
  \begin{minipage}[t]{0.55\textwidth}
    \centering
    \includegraphics[width=0.42\textwidth]{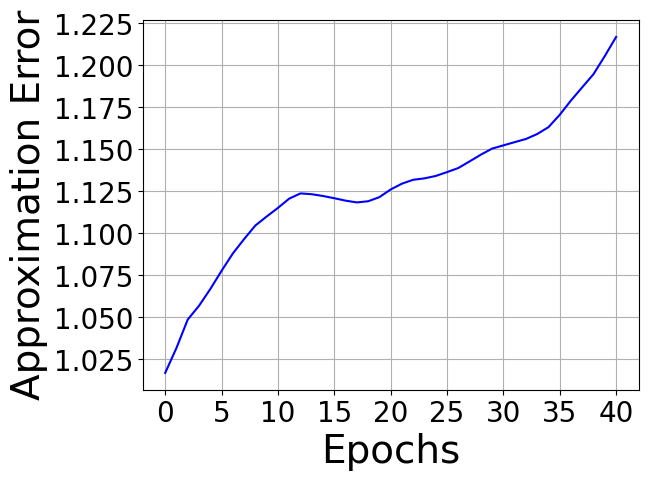}
    \includegraphics[width=0.42\textwidth]{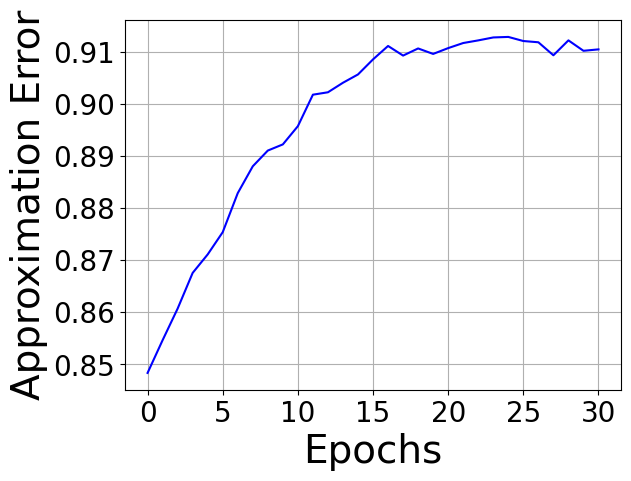}
    \vspace{-0.1in}
    \hspace{-1.5in}
    \captionsetup{justification=centering,margin={0.94in,0in}}
    \caption{Approximation error (GAS).
    }
    \label{fig:embed}
  \end{minipage}
  \vspace{-0.2in}
\end{figure*}
\section{3. Methodology}
In this section, we first elucidate the primary factor influencing the final performance through theoretical analysis. Subsequently, we introduce a simple yet novel and effective approach designed to mitigate staleness at its source, thereby significantly enhancing performance. Furthermore, we present an advanced version that takes into account the importance of nodes, demonstrating improved robustness and performance, particularly in scenarios where staleness is pronounced.

\subsection{3.1 Approximation Error Analysis}

While a historical embedding approaches efficiently preserve the expressive power of the original GNNs and reduce variance, they introduce approximation errors between the exact embeddings and historical embeddings. This limitation becomes more pronounced, particularly in scenarios with large graphs or when employing deep GNN models. To illustrate this challenge and provide motivation for our work, we first introduce a theory to demonstrates that the approximation error of the gradient can be upper-bounded by the accumulation of staleness in each layer. We adhere to the assumptions used in existing works \cite{fey2021gnnautoscale}.

\begin{theorem}[Gradient Approximation Error] 
\label{thm:approximation}
Consider a L-layers GNN $f_\theta^{(l)}(h)$ with Lipschitz constant $\alpha^{(l)}$, UPDATE$^{(l)}_\theta$ function with Lipschitz constant $\beta$, l =1, \dots, L. $\nabla L_{\theta}$ has  Lipschitz constant $\varepsilon$. If $~\forall v\in V $, $||\Bar{h}^{(l)} - h^{(l)}||$ denotes the staleness between  historical embeddings and true embeddings from full batch aggregations, then the approximation error of gradients is bounded by:
\begin{align}
&||\nabla L_{\theta}(\Tilde{h}^{(L)}_v) - \nabla L_{\theta}(h^{(L)}_v)|| \\
\leq &\varepsilon \sum_{l=1}^{L} \left(\prod_{k=l+1}^{L}\alpha^{(k)}\beta|\cN(v)|*||\Tilde{L}_{v,}||*||\Bar{h}^{(k-1)} - h^{(k-1)}||\right). \nonumber
\end{align}
\end{theorem}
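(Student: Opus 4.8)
The plan is to collapse the gradient error onto the final-layer embedding discrepancy and then control that discrepancy through a layer-by-layer recursion that unrolls into the stated sum. First I would invoke the $\varepsilon$-Lipschitz continuity of $\nabla L_\theta$ to write
\[
||\nabla L_\theta(\tilde h^{(L)}_v) - \nabla L_\theta(h^{(L)}_v)|| \le \varepsilon\,||\tilde h^{(L)}_v - h^{(L)}_v||,
\]
reducing the entire problem to bounding $\delta^{(L)}_v := ||\tilde h^{(L)}_v - h^{(L)}_v||$. This cleanly separates the loss geometry (the single factor $\varepsilon$) from the propagation dynamics inside the network.

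Next I would derive the one-step recursion for $\delta^{(l)}_v$. Writing each layer as an UPDATE of node $v$'s own embedding together with the aggregated messages $\sum_{u \in \cN(v)} \tilde L_{vu}\, f_\theta^{(l)}(\cdot)$, I would apply the triangle inequality followed by the two Lipschitz bounds: $\beta$ for the UPDATE map and $\alpha^{(l)}$ for the message map $f_\theta^{(l)}$. The neighbor aggregation is then controlled by factoring out the neighbor count $|\cN(v)|$ and the row norm $||\tilde L_{v,}||$ of the propagation operator, bounding $\sum_u |\tilde L_{vu}|$ times a per-neighbor input gap by $|\cN(v)|\,||\tilde L_{v,}||$ times the largest such gap. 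This produces a recursion of the form $\delta^{(l)}_v \le \alpha^{(l)}\beta|\cN(v)|\,||\tilde L_{v,}||\big(\delta^{(l-1)} + ||\bar h^{(l-1)} - h^{(l-1)}||\big)$, in which the first term is the error propagated from recomputed in-batch neighbors and the second is the staleness freshly injected from the memory table.

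I would then unroll this recursion by downward induction on the layer index from $l = L$ to $l = 1$. At each step the staleness injected at layer $l-1$ is amplified by the accumulated product $\prod_{k=l+1}^{L}\alpha^{(k)}$ of message-map Lipschitz constants along the path to the output, together with the per-step constants $\beta$, $|\cN(v)|$, and $||\tilde L_{v,}||$; summing these contributions over all layers yields the telescoped expression in the statement, and multiplying by the $\varepsilon$ from the first step completes the bound.

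The step I expect to be the main obstacle is closing the recursion despite the coupling between the propagated in-batch error $\delta^{(l-1)}$ and the freshly injected out-of-batch staleness $||\bar h^{(l-1)} - h^{(l-1)}||$, since a naive unrolling entangles the two and does not immediately collapse to a single sum of staleness terms. I would resolve this by measuring every substituted embedding—whether obtained from a prior recomputation or read from the historical memory table—against the same full-batch ground truth, so that both gaps are dominated by the layer-wise staleness and the recursion closes. A secondary detail to verify is that the aggregation bound through $|\cN(v)|$ and $||\tilde L_{v,}||$ is consistent with the normalization of the backbone propagation matrix, matching the assumptions borrowed from \cite{fey2021gnnautoscale}.
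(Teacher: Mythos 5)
Your overall strategy---peeling off the $\varepsilon$-Lipschitz factor of $\nabla L_{\theta}$ and then controlling $||\Tilde{h}^{(L)}_v - h^{(L)}_v||$ by a layer-by-layer argument---is exactly the paper's strategy (the paper phrases the layer argument as a telescoping sum over hybrid compositions $\Tilde{f}_\theta^{(L)}\circ\dots\circ\Tilde{f}_\theta^{(k)}\circ f_\theta^{(k-1)}\circ\dots\circ f_\theta^{(1)}$, which is the same recursion unrolled). However, the one-step recursion you wrote down is flawed, and it does not unroll to the stated bound. You claim $\delta^{(l)}_v \le \alpha^{(l)}\beta|\cN(v)|\,||\Tilde{L}_{v,}||\big(\delta^{(l-1)} + ||\Bar{h}^{(l-1)} - h^{(l-1)}||\big)$, i.e., the same combined factor $\alpha^{(l)}\beta|\cN(v)|\,||\Tilde{L}_{v,}||$ multiplies both the propagated error and the injected staleness. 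This conflates two bounds that play different roles. The previously accumulated in-batch error $\delta^{(l-1)}_v$ enters layer $l$ as an ordinary input perturbation, so it is amplified by that layer's own Lipschitz constant $\alpha^{(l)}$ alone; the factor $\beta|\cN(v)|\,||\Tilde{L}_{v,}||$ arises only when bounding the discrepancy between aggregating the historical $\Bar{h}^{(l-1)}$ versus the true $h^{(l-1)}$ over out-of-batch neighbors, so it attaches only to the staleness term. The correct recursion is additive: $\delta^{(l)}_v \le \alpha^{(l)}\delta^{(l-1)}_v + \beta|\cN(v)|\,||\Tilde{L}_{v,}||\,||\Bar{h}^{(l-1)} - h^{(l-1)}||$.

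Two concrete consequences of your version. First, when you unroll it, each staleness term $||\Bar{h}^{(l-1)} - h^{(l-1)}||$ picks up the factor $\prod_{k=l}^{L}\big(\alpha^{(k)}\beta|\cN(v)|\,||\Tilde{L}_{v,}||\big)$, so the quantity $\beta|\cN(v)|\,||\Tilde{L}_{v,}||$ appears raised to the power $L-l+1$, whereas in the theorem it appears exactly once per summand, with only the $\alpha^{(k)}$'s compounding across layers. Even granting your recursion, you would therefore prove a different (generically much weaker) bound, not the stated one. Second, your recursion is not even guaranteed to be a valid inequality: bounding the two triangle-inequality terms by the combined product requires $\beta|\cN(v)|\,||\Tilde{L}_{v,}||\ge 1$ and $\alpha^{(l)}\ge 1$ respectively, neither of which is assumed. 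With the additive recursion in place (and $\delta^{(0)}_v = 0$), downward unrolling gives precisely $\sum_{l=1}^{L}\big(\prod_{k=l+1}^{L}\alpha^{(k)}\big)\beta|\cN(v)|\,||\Tilde{L}_{v,}||\,||\Bar{h}^{(l-1)} - h^{(l-1)}||$, and multiplying by $\varepsilon$ finishes the proof exactly as the paper does; with that one repair, your argument and the paper's are the same proof in different notation.
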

\vspace{-0.1in}
\noindent The proof of the above theorem can be found in Appendix A. We can conclude that the gradient is influenced by the number of layers $L$, the number of neighbors $\cN(v)$, and the per-layer approximation error $||\Bar{h}^{(k-1)} - h^{(k-1)}||$. If we effectively reduce the approximation error, we can efficiently minimize this error accumulation across layers and tighten the upper bound.

\subsection{3.2 REST: REducing STaleness}
\label{sec:SR}

As discussed in Appendix M, existing works such as GAS~\cite{fey2021gnnautoscale}, GraphFM-OB~\cite{yu2022graphfm}, and ReFresh~\cite{huang2023refresh} endeavor to tackle staleness from various angles, such as minimizing nodes inter-connectivity, applying regularization techniques, employing feature momentum by in-batch nodes, and expanding in-batch neighbor size, to alleviate staleness. However, these approaches can not control the staleness effectively and efficiently.
Consequently, they typically yield only marginal performance improvements. We attribute this to their inability to resolve the staleness issue stemming from the disparity in updating frequency of model parameters and the memory tables for historical embeddings. Therefore, we propose a simple yet novel approach to adjust the frequency of forward and backward propagation, addressing this problem at its root. Note that our model is orthogonal to all of the existing techniques and can be seamlessly integrated.

In standard training procedures, forward and backward propagation are typically interconnected. During forward propagation, intermediate values are computed at each layer, which are then used during backward propagation to compute gradients and update the model parameters. This coupling ensures that the gradients computed during backward propagation accurately reflect the effect of the model parameters on the loss function, allowing for effective parameter updates during optimization. Under these limitations, it is natural to mitigate feature staleness by using larger batch sizes, but it will cost much higher memory and computation costs.

\noindent\textbf{Effortless and Effective Staleness Reduction.} In historical embedding methods, in addition to the computation of intermediate values, forward propagation also plays a role in updating the memory table by in-batch nodes. Consequently, only the historical embeddings of the in-batch nodes are updated during a single training iteration. Taking GAS as an example, the entire table completes an update after $k=\frac{n}{B}$ training iterations, where $n$ is the number of nodes and $B$ is the batch size. However, during this period, model parameters undergo updates $k$ times. The inability to refresh historical embeddings using the most recent parameters results in the approximation errors and it accumulates at each training iteration. When retrieving the embedding from memory, this error introduces significant bias to the final representation, as illustrated in Figure~\ref{fig:embed}. The root issue lies in the fact that the frequency of updating the memory table is too slow compared with the frequency of model updates. The disparity between the frequency originates from the forward and backward propagation are tightly coupled.

\begin{figure*}[t]
    \centering
    \includegraphics[width=0.75\textwidth]{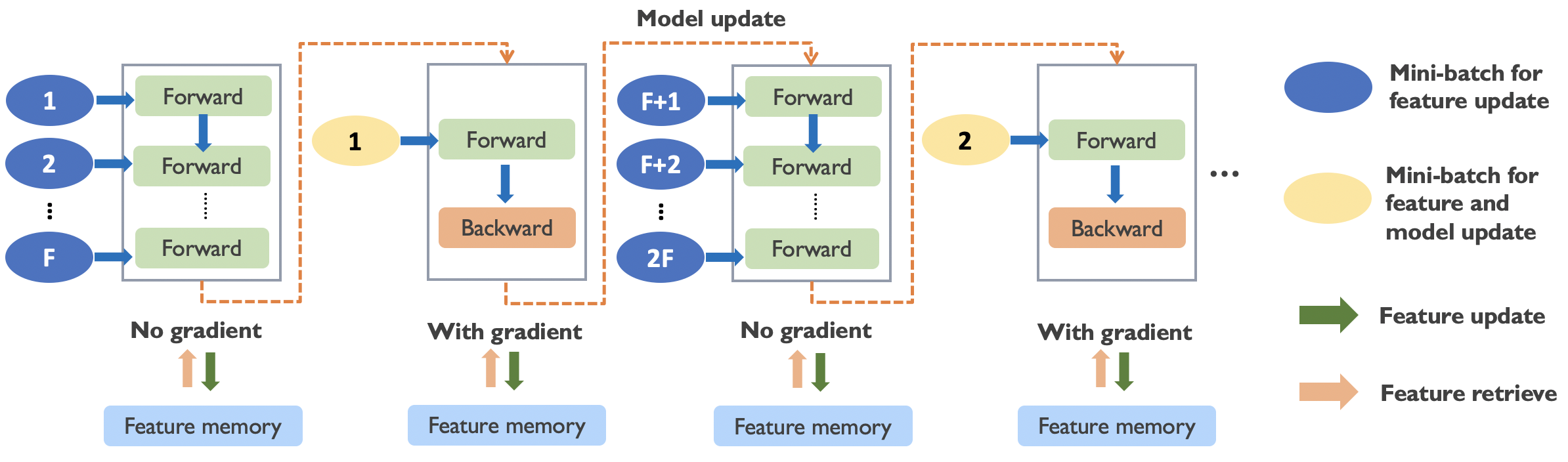}
    \vspace{-0.18in}
    \caption{Training process for the proposed REST technique. (1) F mini-batches $\mathbf{B}_{1} \dots \mathbf{B}_{F}$ (blue ellipse) are executed without computing gradients to update memory table (2) Another one mini-batch $\Tilde{B}$ (yellow ellipse) is processed with gradient computation to update the model parameters.}
    \label{fig:alg}
\vspace{-0.2in}
\end{figure*}

Building upon this analysis, we propose to simply decouple the forward and backward propagations and adjust their execution frequency. The process is illustrated in Figure~\ref{fig:alg} and outlined in Algorithm~\ref{alg}. Specifically, $\mathit{F}$ times the forward propagations on batch $\mathbf{B}_{1} \dots \mathbf{B}_{F}$ are initially executed without requiring gradient (line 7). This process updates the memory table at each layer using in-batch nodes exclusively. Subsequently, another forward propagation with different batch $\mathbf{\Tilde{B}_j}$ is executed with requiring gradients for the following model update by backward (line 10). The updated embeddings from every forward propagation (line 11 and line 8) are then cached in memory to facilitate the updating of embeddings and mitigate staleness issues. Hence, our REST approach effectively reduces the gap in update frequencies from $k$ to $\frac{k}{F}$. The hyperparameter $F$ directly governs the staleness and allows for flexible adjustments without increasing memory cost. Our experiments in Appendix empirically demonstrate that employing $\mathit{F} = 1$ not only yields significantly improved performance compared to baselines but also accelerates convergence than baselines. This training process is very simple and flexible. For example, it can be further accelerated by parallelizing the forward propagations (line 7 and line 10) on multiple GPUs. However, in this paper, we only use single GPU for a fair comparison with the baselines.

\vspace{-0.1in}
\begin{algorithm}[H]
    \caption{REST Technique}    
    \label{alg}       
    \begin{algorithmic}[1] 
    \State \textbf{Input:} Input graph $\mathcal{G} = (\mathcal{V}, \mathcal{E})$, GNN $f(\mathbf{X}, \Theta^0)$
    \State \textbf{Output:} Fine-tuned GNN $\Tilde{f}(\mathbf{X}, \Theta^*)$ 
    \State \textbf{Begin}
    \For{each mini-batch $\mathbf{B}_{1} \dots \mathbf{B}_{F}$; \\
        each mini-batch $\mathbf{\Tilde{B}_j}$}
        \For{$i$ in updating frequency $\mathit{F}$}
        \State $\mathbf{H}_{1} = f(\mathbf{B}_{i}, \Theta^k)$: 
        offline forward propagation

        \State Cache into memory $\mathbf{M} \leftarrow \mathbf{H}_{1}$[in-batch]
        \EndFor
    \State $\mathbf{H}_{2} = f(\mathbf{\Tilde{B}_j}, \Theta^k)$: forward propagation with backward
    \State Cache into memory $\mathbf{M} \leftarrow \mathbf{H}_{2}$[in-batch]
    \State Compute loss and gradient update 
    \EndFor
   \end{algorithmic} 
\end{algorithm}
\vspace{-0.2in}

\begin{theorem}
\label{thm:convergence}
Given the upper bound of the expectation of gradients' norm in the state-of-the-art historical embeddings methods such as GAS, LMC, and REST, which is 
\begin{align}
& E[||\nabla_\theta L(\theta_R)||_2] \nonumber \\
&\leq \left(\frac {2(L(\theta_1)- L_\theta ^* + G_{\theta})} {N^{\frac{1}{3}}} + \frac {\epsilon G_\theta}{N^{\frac{2}{3}}} + \frac{G_\theta}{N(1-\sqrt{\rho})}\right)^{\frac{1}{2}}
\end{align}
where \( G_\theta \) is the upper bound of gradient approximation error, \( N \) is the number of iterations, \( R \) is chosen uniformly from \( [N] \), \( \rho \in (0,1) \) \cite{chen2017stochastic,shi2023lmc}. Based on Theorem \ref{thm:approximation}, the upper bound \( G_\theta \) of REST is tighter than that of existing historical embedding methods. Consequently, the convergence speed of REST surpasses that of existing works.
\end{theorem}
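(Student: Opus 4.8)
The plan is to treat the displayed convergence inequality as given (it is quoted from \cite{chen2017stochastic,shi2023lmc} and is asserted to hold verbatim for GAS, LMC, and REST alike), and to reduce the claim to two sub-statements: (i) the right-hand side is a monotonically increasing function of the error constant $G_\theta$, and (ii) REST attains a smaller $G_\theta$ than GAS and LMC under the same backbone, graph, batch schedule, and step sizes. The theorem then follows immediately, since a uniformly smaller upper bound on $E[||\nabla_\theta L(\theta_R)||_2]$ is precisely the statement that REST converges at least as fast.

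First I would dispatch the monotonicity (i) by inspection. Rewriting the bracketed quantity as $\frac{2(L(\theta_1)-L_\theta^*)}{N^{1/3}} + G_\theta\big(\frac{2}{N^{1/3}} + \frac{\epsilon}{N^{2/3}} + \frac{1}{N(1-\sqrt{\rho})}\big)$, every coefficient multiplying $G_\theta$ is strictly positive because $N\ge 1$, $\epsilon>0$, and $\rho\in(0,1)$ force $1-\sqrt{\rho}>0$. Hence the bracket is affine and strictly increasing in $G_\theta$, and composing with the increasing map $t\mapsto t^{1/2}$ preserves monotonicity. It therefore suffices to compare the two error constants $G_\theta^{\mathrm{REST}}$ and $G_\theta^{\mathrm{GAS/LMC}}$.

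Next I would control $G_\theta$ through Theorem~\ref{thm:approximation}. Since $G_\theta$ is by definition an upper bound on the gradient approximation error $||\nabla L_\theta(\tilde h^{(L)}_v)-\nabla L_\theta(h^{(L)}_v)||$, Theorem~\ref{thm:approximation} expresses it as a nonnegative combination of the per-layer staleness terms $||\bar h^{(k-1)}-h^{(k-1)}||$ with weights $\varepsilon$, $\prod_{k=l+1}^{L}\alpha^{(k)}$, $\beta$, $|\cN(v)|$, and $||\tilde L_{v,}||$. Because these weights depend only on the shared backbone, graph, and batch and not on the update protocol, comparing $G_\theta$ across methods reduces to comparing the per-layer staleness each one induces.

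Finally I would show that REST shrinks each staleness term, and this is the step I expect to be the main obstacle. The key mechanism, established in Section~3.2, is that REST refreshes the memory table once every $k/F$ model updates instead of once every $k$ updates. Modeling a fixed entry's staleness as accumulating over the number of elapsed model updates since its last refresh, a bounded per-iteration parameter drift (step size times bounded gradient) together with the Lipschitz dependence of embeddings on parameters (through $\alpha^{(l)}$ and $\beta$) makes the staleness grow at most linearly in that count; telescoping across the $L$ layers preserves this linear dependence, so reducing the expected elapsed-update count by the factor $F$ yields $G_\theta^{\mathrm{REST}}\le G_\theta^{\mathrm{GAS/LMC}}$, and the monotonicity from the previous step delivers the conclusion. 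The difficulty is making this quantitative: one must impose an explicit bound on the per-iteration movement of $\theta$, carry out a careful layerwise telescoping that tracks how parameter drift propagates through the nonlinear forward pass, and verify that the structural constants $\alpha^{(k)}$, $\beta$, $\varepsilon$, $|\cN(v)|$ are genuinely identical across the compared methods, so that the induced staleness is the only quantity that differs.
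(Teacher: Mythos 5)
Your proposal follows essentially the same route as the paper: the paper likewise takes the displayed bound as a given result from the cited works, implicitly relies on its monotonicity in $G_\theta$, and reduces the claim to REST having smaller per-layer staleness via Theorem~\ref{thm:approximation}. Worth noting is that the paper provides no separate proof of this theorem at all and never formalizes the step you flag as the main obstacle --- it justifies REST's smaller staleness only by the update-frequency argument of Section~3.2 (memory refresh every $k/F$ rather than $k$ model updates) and the empirical evidence on memory persistence and approximation error in Appendix~G --- so your sketch is, if anything, more explicit than the paper about what a fully rigorous argument would require.
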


\noindent\textbf{Exceptional Generality} In addition to the simplicity that REST achieves, which many other scalable GNNs cannot attain, REST also exhibits remarkable generality. One notable aspect is that REST can be applied to backward propagation as well. In this case, we can maintain a memory table for historical gradients and set a different frequency $\Tilde{F}$ to update the gradient table without updating model parameters. This process is symmetric to the forward process. Due to space constraints, we provide additional results and analysis of \textit{\textbf{Bidirectional REST}} in the Appendix E.

\subsection{3.3 REST: Sampling Strategy}
\label{sec:SR_important}

It's crucial to emphasize that our algorithm is versatile and applicable to any historical embedding methods. Furthermore, this flexibility also extends to the sampling strategy employed in forward propagation, encompassing various techniques such as uniform sampling, importance sampling, and any custom sampling method tailored to the specific requirements of the task at hand. Here, we propose a feasible option to further address the staleness issue by considering the varying significance of individual nodes within the graph, termed \textbf{REST-IS} (importance sampling). Notably, nodes with high degrees are more likely to serve as neighbor nodes for the sampled nodes, making a more substantial contribution to the final representation and carrying higher importance. The staleness in the embeddings of these highly important nodes can make a more detrimental impact on performance compared to nodes with lower importance. 

Motivated by this potential issue, rather than employing conventional importance sampling techniques, we propose a novel and efficient method for importance sampling. Our method utilizes the neighbor nodes from batch ($\Tilde{\vB}_j$ in Algorithm~\ref{alg}) in the forward propagation for model updates (line 10) and lets them serve as the in-batch nodes in batch $\vB=\vB_1\cup\vB_2\dots\cup\vB_F$ for updating the memory table (line 7). We are motivated by the fact that if one node is frequently served as neighbor nodes by different batches, it's likely to holds higher importance than other nodes in the graph. We summarize this approach by presenting these formulas:

\vspace{-0.15in}
\begin{align}
h_v^{(l+1)} &= f_\theta^{(l+1)}(h_v^{l}, [h_u^{l}]_{u\in \cN(v)}) \\
&= f_\theta^{(l+1)}(h_v^{l}, [h_u^{l}]_{u\in \cN(v)\cap \tilde{B}} \cup [h_u^{l}]_{u\in \cN(v)\setminus \tilde{B}})\\
&\approx f_\theta^{(l+1)}\bigg(h_v^{l}, [h_u^{l}]_{u\in \cN(v)\cap \tilde{B}} \nonumber
\end{align}
\vspace{-0.2in}
\begin{align}
\quad \cup \big[f_\theta^{(l+1)}(h_w^{l}, [h_w^{l}]_{w\in \cN(u)\cap B} \cup \underbrace{[\Bar{h}_w^{l}]_{w\in \cN(u)\setminus B}}_{\text{Historical}})\big]_{u\in \cN(v)\setminus \tilde{B}}\bigg) \nonumber
\end{align}

Similar to the initial approach, we employ historical embeddings $[\Bar{h}_w^{l}]_{w\in \cN(u)\textbackslash B}$ for these out-of-batch nodes. Subsequently, we store both the in-batch nodes $[h_w^{l}]_{w\in \cN(u)\cap \Tilde{B}}$ and the in-batch nodes $[h_u^{l}]_{u\in \cN(v)\cap B}$ into memory for updating historical embeddings, as they are all freshly computed. As only neighbor nodes are chosen as in-batch nodes during memory table update, this approach simply and efficiently accounts for the importance of each node. This approach can be especially helpful when dealing with a high staleness situation, as shown in our experiment.
\vspace{-0.1in}

\section{4. Experiments}
\label{sec:experiment}
In this section, we conduct experiments to showcase the superior ability of our proposed algorithms in improving the performance while accelerating the convergence. 

\noindent\textbf{Experimental setting.} 
We first provide a best performance comparison with multiple major baselines including GCN~\cite{kipf2016semi}, GraphSage~\cite{hamilton2017inductive}, FastGCN~\cite{chen2018fastgcn}, LADIES~\cite{zou2019layer}, Cluster- GCN~\cite{chiang2019cluster}, GraphSAINT~\cite{Zeng2020GraphSAINT}, SGC~\cite{wu2019simplifying}, GNNAutoScale(GAS)~\cite{fey2021gnnautoscale}, VR-GCN~\cite{chen2017stochastic}, MVS-GCN~\cite{cong2020minimal} and GrpahFM~\cite{yu2022graphfm} on three widely used large-scale graph datasets including REDDIT, ogbn-arxiv, and ogbn-products~\cite{hu2020open}. Based on our current computation resources, we also include two larger datasets, ogbn-papers100m and MAG240M, in Appendix J. For our model, we opt to use GAS based approach for simplicity since it's proven the best~\cite{fey2021gnnautoscale}. We also provide comparison with recent baselines Refresh \cite{huang2023refresh} and LMC \cite{shi2023lmc} in Appendix F. The hyperparameter tuning of baselines closely follows their default settings. Besides, we have chosen to stick to base GNN models in this work to maintain consistency across our baselines although numerous sophisticated tricks could potentially enhance performance. We emphasize that our proposed method is orthogonal to them. For our model, we choose a frequency of $F$ = 1, as it has been proven to be effective in ablation study. The performance is shown in Table ~\ref{tab:baseline}. OOM stands for out-of-memory. Note that, REST uses default uniform sampling while REST-IS uses proposed importance sampling.

\subsection{4.1 Performance}

Table~\ref{tab:baseline} showcases the accuracy comparison of REST and REST-IS with state-of-the-art baseline methods across multiple backbones.
We can observe the following:

\noindent ~$\bullet$ All historical embedding methods exhibit inferior performance compared to simple models like GraphSAGE on the large dataset, ogbn-products, aligning with our earlier observations. Note that GraphFM uses in-batch nodes to compensate for staleness to a certain extent. However, it only achieves marginal improvement. This is attributed to the fact that we address staleness from its root, as analyzed in Methodology, whereas they do not.

\noindent ~$\bullet$ Compare with all other baselines on large scale datasets, our proposed algorithms outperform all baselines on ogbn-arxiv and ogbn-products and reach comparable performance on Reddit. Specifically, REST can achieve 73.2\% on ogbn-arixv and 80.5\% on ogbn-products. Note that, it boosts the performance of GAS by 3.6\% when using APPNP as GNN backbone. The
only exceptions are that GraphSAINT slightly outperforms our method. However, they require complicated and time-consuming preprocessing. 

\noindent ~$\bullet$ When combined with the results on ogbn-products, ogbn-papers100M and MAG240M, our model shows a significant performance boost compared to smaller datasets (eg. ogbn-arxiv). This clearly shows that our work effectively addresses the staleness issue. Given the increasing size of datasets in practical, these results highlight the potential and necessity of our work.

\vspace{-0.1in}
\begin{table}[ht!]
\caption{Accuracy comparison ($\%$) with major baselines.}
\label{tab:baseline}
\vspace{-0.2in}
\begin{center}
\begin{small}
\setlength{\tabcolsep}{2pt}
\resizebox{0.9\linewidth}{!}{%
\begin{tabular}{lccccc}
\toprule
    &{\footnotesize{\textbf{\#\,nodes}}} & \footnotesize{230K} & \footnotesize{169K} & \footnotesize{2.4M} \\ [-0.1cm] &
    {\footnotesize{\textbf{\#\,edges}}} & \footnotesize{11.6M} & \footnotesize{1.2M} & \footnotesize{61.9M} \\ [-0.05cm] {{\textbf{Framework}}} &
    {{\textbf{GNNs}}} & {\textsc{Reddit}} & \textsc{Arxiv} & \textsc{Products} \\
\midrule
\multirow{8}{*}{\textbf{Major}} & GraphSAGE    & 95.4  & 71.5 & 78.7 \\
& FastGCN    & 93.7  & --- & ---  \\
& LADIES    & 92.8  & --- & --- \\
& Cluster-GCN    & 96.6  & --- & 79.0  \\
& GraphSAINT    & \textbf{97.0}  & --- & 79.1  \\
& SGC    & 96.4  & --- & ---  \\
& VR-GCN    & 94.1  & 71.6 & 76.4 \\
& MVS-GNN  & 94.9 & 71.6 & 76.9 \\
\hline
\multirow{3}{*}{\textbf{Full Batch}} & GCN    & 95.4  & 71.6 & OOM   \\
& APPNP   & 96.1  & 71.8 & OOM\\
& GCNII     & 96.1  & 72.8 & OOM    \\
\hline
\multirow{3}{*}{\textbf{GAS}} & GCN    & 95.4  & 71.7 & 76.7   \\
& APPNP   & 96.0  & 71.9 & 76.9 \\
& GCNII    & 96.7 & 72.8 & 77.2   \\
\hline
\multirow{2}{*}{\textbf{GraphFM}} & GCN  & 95.4  & 71.8 & 76.8   \\
& GCNII & 96.8 & 72.9 & 77.4   \\
\hline
\multirow{3}{*}{\textbf{REST (Ours)}} & GCN   & 95.6 & 72.2 & 79.6 \\
& APPNP  &  96.4 & 72.4 & 80.0\\
& GCNII   & 96.8 & \textbf{73.2} & 79.8  \\
\hline
\multirow{3}{*}{\textbf{REST-IS (Our)}} & GCN   & 95.7 & 72.3 &  78.6\\
& APPNP  & 96.5  & 72.4 & \textbf{80.5} \\
& GCNII   & 96.8 & 72.8 & 79.6  \\
\bottomrule
\end{tabular}
}
\end{small}
\end{center}
\vspace{-0.3in}
\end{table}

\subsection{4.2 Improvement for Historical Embeddings}
\label{sec:memoryGNN-performance}
\textbf{Experimental setting.}
According to our analysis in methodology, our proposed approach can efficiently reduce the staleness in all of the memory-based algorithms, especially when the batch size is small. Hence, we conduct further experiments under different batch sizes to show the effectiveness of our algorithm. We first present a comparison with three representative historical embedding methods, GNNAutoScale (GAS)~\cite{fey2021gnnautoscale}, VR-GCN~\cite{chen2017stochastic} and MVS-GCN~\cite{cong2020minimal}. The results are shown in Table~\ref{tab:GAS} and Table~\ref{tab:VR_MVS}. 
In these experiments, we closely adhere to their settings, including the datasets they utilized in their paper and official repositories. 

In detail, both VR-GCN and MVS-GCN leverage neighbor sampling, akin to the approach introduced by GraphSAGE~\cite{hamilton2017inductive}, whereas GAS adopts a different strategy. GAS begins the process by employing the METIS algorithm to partition the graph into distinct clusters. Following this partitioning, one or more of these clusters are thoughtfully selected to constitute a batch for computation purposes. The total number of clusters is detailed in Table~\ref{tab:GAS} under the name ``parts''. The number under ``BS'' (batch size) indicates the number of clusters included in one batch.

\vspace{-0.1in}
\begin{table}[ht!]
\centering
\caption{Accuracy ($\%$) improvement for GAS. \textcolor{yellow}{Yellow} denotes the best performance under a specific GNN model.}
\label{tab:GAS}
\vspace{-0.1in}
\resizebox{1.05\linewidth}{!}{%
\begin{tabular}{l|c|c|ccccc}
\toprule
{{\textsc{Dataset}}} &{{\textsc{GNN}}} &{{\textsc{Parts}}} & {{\textsc{BS}}} & {\textsc{GAS}} & \textsc{\textbf{+REST}} & \textsc{\textbf{+REST-IS}} \\
\midrule
\multirow{6}{*}{\textbf{Products}} 
& \multirow{2}{*}{\textbf{GCN}} & \multirow{2}{*}{\textbf{70}} & 5   & 75.6 $\pm$ 0.4 & 77.6 $\pm$ 0.2 & 77.9 $\pm$ 0.2  \\         
&  &  & 10   &  76.5 $\pm$ 0.2 & \cellcolor{yellow} 79.4 $\pm$ 0.2 & 78.6  $\pm$ 0.1  \\
\cline{2-7}
& \multirow{2}{*}{\textbf{APPNP}} & \multirow{2}{*}{\textbf{40}} & 5   &  75.0 $\pm$ 0.4 & 79.7 $\pm$ 0.2 &  \cellcolor{yellow} 80.4 $\pm$ 0.1\\
&  &  & 10   & 76.8 $\pm$ 0.1  & 80.1 $\pm$ 0.1 & \cellcolor{yellow} 80.4  $\pm$ 0.1  \\
\cline{2-7}
& \multirow{2}{*}{\textbf{GCNII}} & \multirow{2}{*}{\textbf{150}} & 5   &  74.8 $\pm$ 0.6 & 75.9 $\pm$ 0.3 & 76.2 $\pm$ 0.3  \\   
&  &  & 20   & 76.9 $\pm$ 0.3 & \cellcolor{yellow} 79.8 $\pm$ 0.3 & 79.6 $\pm$ 0.2  \\
\midrule
\multirow{6}{*}{\textbf{Reddit}} 
& \multirow{2}{*}{\textbf{GCN}} & \multirow{2}{*}{\textbf{200}} & 20   & 94.8 $\pm$ 0.2 & 95.3 $\pm$ 0.1 & 95.4  $\pm$ 0.1 \\
&  &  & 100  &  95.4 $\pm$ 0.1 & 95.6 $\pm$ 0.1 & \cellcolor{yellow} 95.7 $\pm$ 0.1\\
\cline{2-7}
& \multirow{2}{*}{\textbf{APPNP}} & \multirow{2}{*}{\textbf{200}} & 20   &  92.6 $\pm$ 0.2 & 95.9 $\pm$ 0.1 &  96.1 $\pm$ 0.1 \\
&  &  & 100   & 96.0 $\pm$ 0.1 & 96.4 $\pm$ 0.1 & \cellcolor{yellow} 96.5 $\pm$ 0.1  \\
\cline{2-7}
& \multirow{2}{*}{\textbf{GCNII}} & \multirow{2}{*}{\textbf{200}} & 20   &  93.9 $\pm$ 0.1 & 95.7 $\pm$ 0.1 &  95.7  $\pm$ 0.1 \\
&  &  & 100   & 96.7 $\pm$ 0.1 & \cellcolor{yellow} 96.8 $\pm$ 0.1 & \cellcolor{yellow} 96.8  $\pm$ 0.1 \\
\midrule
\multirow{10}{*}{\textbf{Arxiv}} 
& \multirow{4}{*}{\textbf{GCN}} & \multirow{4}{*}{\textbf{80}} & 5   &  67.6 $\pm$ 0.6 & 71.1 $\pm$ 0.3 &  71.9  $\pm$ 0.1 \\
&  &  & 10   & 69.5 $\pm$ 0.5  & 71.3 $\pm$ 0.2 &  72.1 $\pm$ 0.1  \\
&  &  & 20   & 70.6 $\pm$ 0.2  & 71.5 $\pm$ 0.1 &  72.2 $\pm$ 0.2  \\
&  &  & 40   &  71.5 $\pm$ 0.2 & 72.2 $\pm$ 0.1 &  \cellcolor{yellow} 72.3 $\pm$ 0.2  \\
\cline{2-7}
& \multirow{3}{*}{\textbf{APPNP}} & \multirow{3}{*}{\textbf{40}} & 5   & 69.3 $\pm$ 0.4  & 71.7 $\pm$ 0.2 &  72.3 $\pm$ 0.2  \\
&  &  & 10   &  70.0 $\pm$ 0.3 & 72.1  $\pm$ 0.2 &  \cellcolor{yellow} 72.4 $\pm$ 0.1  \\
&  &  & 20   & 71.6 $\pm$ 0.3  & \cellcolor{yellow} 72.4 $\pm$ 0.1 & 72.3 $\pm$ 0.2   \\
\cline{2-7}
& \multirow{3}{*}{\textbf{GCNII}} & \multirow{3}{*}{\textbf{40}} & 5   & 70.0 $\pm$ 0.3 & 72.6 $\pm$ 0.3 &  72.7 $\pm$ 0.2 \\
&  &  & 10   & 71.9 $\pm$ 0.2 & 72.7 $\pm$ 0.2 &  72.7 $\pm$ 0.1 \\
&  &  & 20   & 72.5 $\pm$ 0.3 & \cellcolor{yellow} 73.1 $\pm$ 0.1 & 72.8  $\pm$ 0.2 \\
\bottomrule
\end{tabular}
}
\end{table}

\vspace{-0.2in}
\begin{table}[ht!]
\centering
\captionsetup{justification=centering}
\caption{Improvement (\%) for VR-GCN and MVS-GCN. \textcolor{yellow}{Yellow} and \textcolor{pink}{pink} indicate the best performance for them.}
\vspace{-0.1in}
\label{tab:VR_MVS}
\setlength{\tabcolsep}{4pt}
\resizebox{0.8\linewidth}{!}{%
\begin{tabular}{l|c|ccc}
\toprule
    \textbf{Dataset} & \textbf{Batch Size} & \textbf{VR-GCN} & \textbf{+REST} & \textbf{+REST-IS} \\
\midrule
\multirow{2}{*}{\textbf{Products}} 
    & 10000   &  76.3 $\pm$ 0.3 & 77.4 $\pm$ 0.3 &  78.0 $\pm$ 0.2 \\
    & 50000   & 76.4 $\pm$ 0.1 & 77.5 $\pm$ 0.1 & \cellcolor{yellow} 78.1 $\pm$ 0.2 \\
\midrule
\multirow{3}{*}{\textbf{Reddit}}    
    & 256   & 91.7 $\pm$ 0.2 & 93.8 $\pm$ 0.2 & 94.1 $\pm$ 0.1 \\
    & 512  &  93.0 $\pm$ 0.2 & 94.1 $\pm$ 0.2 &  94.2 $\pm$ 0.2 \\
    & 2048  &  94.1 $\pm$ 0.1 & \cellcolor{yellow} 94.5 $\pm$ 0.1 &  94.3 $\pm$ 0.1 \\
\midrule
\multirow{4}{*}{\textbf{Arxiv}} 
    & 128   & 70.4 $\pm$ 0.3 & 71.7 $\pm$ 0.3 & 71.9  $\pm$ 0.1 \\
    & 512  &  71.5 $\pm$ 0.3 & 72.3 $\pm$ 0.2 &  \cellcolor{yellow} 72.4 $\pm$ 0.1 \\
    & 2048   &  71.5 $\pm$ 0.2 & 72.1 $\pm$ 0.2 & 72.1  $\pm$ 0.2 \\
    & 8192   &  71.6 $\pm$ 0.1 & 72.2 $\pm$ 0.1 & 72.1 $\pm$ 0.3 \\
\midrule
    & \textbf{Batch Size} & \textbf{MVS-GCN} & \textbf{+REST} & \textbf{+REST-IS} \\
\midrule
\multirow{2}{*}{\textbf{Products}} 
    & 10000   & 76.7 $\pm$ 0.2 & 77.9 $\pm$ 0.3 & 78.2 $\pm$ 0.3 \\
    & 50000   & 76.9 $\pm$ 0.1 &  78.1 $\pm$ 0.2 &  \cellcolor{pink} 78.3 $\pm$ 0.3 \\
\midrule
\multirow{3}{*}{\textbf{Reddit}}    
    & 256   & 93.8 $\pm$ 0.2 & 94.5 $\pm$ 0.2 &  94.7 $\pm$ 0.1 \\
    & 512  & 94.2 $\pm$ 0.1 & 94.8 $\pm$ 0.1 & 95.0 $\pm$ 0.2 \\
    & 2048  & 94.9 $\pm$ 0.1 & \cellcolor{pink} 95.2 $\pm$ 0.1 &  \cellcolor{pink} 95.2 $\pm$ 0.2 \\
\midrule
\multirow{4}{*}{\textbf{Arxiv}} 
    & 128   & 70.9 $\pm$ 0.2 & 71.9 $\pm$ 0.2 & 72.0 $\pm$ 0.3 \\
    & 512  & 71.4 $\pm$ 0.1 & 72.4 $\pm$ 0.1 & \cellcolor{pink} 72.5 $\pm$ 0.3 \\
    & 2048   & 71.5 $\pm$ 0.1 & 72.0 $\pm$ 0.1 & 72.1 $\pm$ 0.1 \\
    & 8192   & 71.6 $\pm$ 0.1 & 72.1 $\pm$ 0.1 & 72.0 $\pm$ 0.2 \\
\bottomrule
\end{tabular}
}
\end{table}
\vspace{-0.1in}

\begin{figure*}[!ht]
  \begin{minipage}[t]{0.5\textwidth}
    \centering
    \hspace{-0.3in}
    \includegraphics[width=0.45\textwidth]{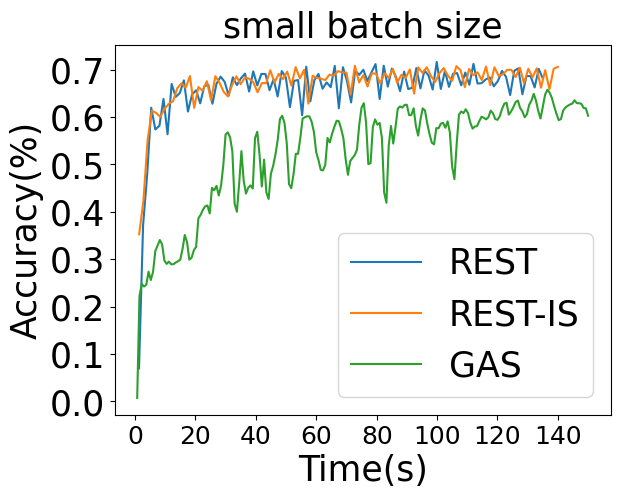}
    \includegraphics[width=0.45\textwidth]{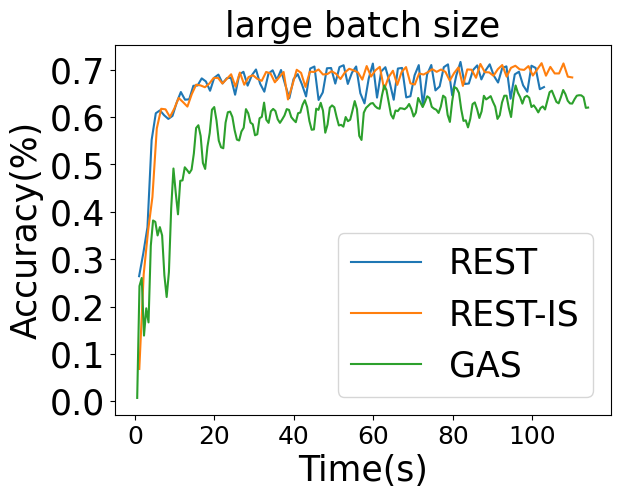}
    \captionsetup{justification=centering,margin={-0.3in,0in}}
    \caption{Convergence on ogbn-arxiv.\hspace{5in}}
    \label{fig:converge_arxiv_2}
  \end{minipage}
  \hspace{-0.8in}
  \begin{minipage}[t]{0.5\textwidth}
    \centering
    \includegraphics[width=0.45\textwidth]{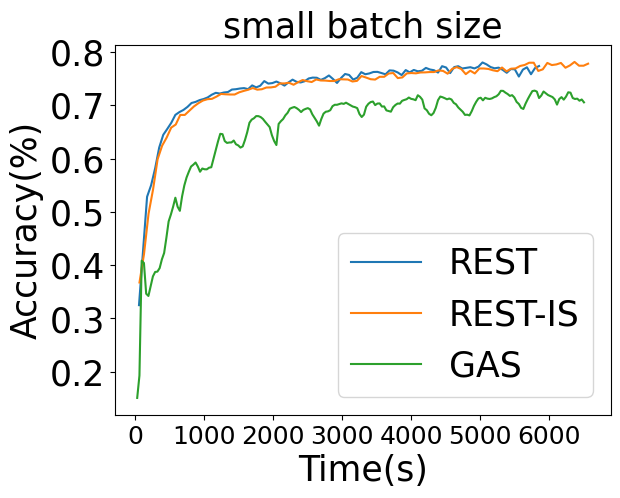}
    \includegraphics[width=0.45\textwidth]{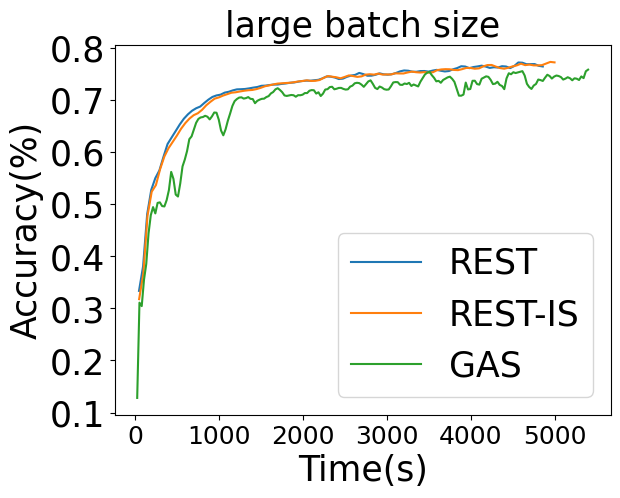}
    \hspace{-1.2in}
    \captionsetup{justification=centering,margin={0.65in,0in}}
    \caption{Convergence on ogbn-products.} 
    \label{fig:converge_products_2}
  \end{minipage}
  \vspace{-0.2in}
\end{figure*}

\noindent \textbf{Performance Analysis.} From results, we can observe:

\noindent ~$\bullet$ 
Our algorithms outperform all the baselines by a substantial margin, particularly in scenarios with significant staleness, such as large datasets or small batch sizes. We also show similar results of GraphFM in Appendix I. For example, on the ogbn-arxiv dataset, we observe a performance boost of 2.4\% with REST and 3.0\% with REST-IS using a batch size of 5 and APPNP as backbone. Similarly, on ogbn-products, our algorithms outperform GAS by 4.7\% with REST and 5.4\% with REST-IS under the same batch size setting. The similar phenomenon is noticeable when our approach is implemented on VR-GCN and MVS-GCN.

\noindent ~$\bullet$ Our proposed algorithms exhibit exceptional adaptability by seamlessly integrating with all historical embedding methods and surpassing the performance of three prominent historical embedding methods. This showcases the flexibility and universality of our algorithms.  

\subsection{4.3 Efficiency Analysis}
\label{sec:efficiency}
\textbf{Memory usage and running time.} 
To verify the efficiency of our proposed approach, we conducted an empirical analysis comparing our method to the scalable GNN, GraphSAGE, and two historical embedding methods, VR-GCN and GAS. Due to space limit, the complete results and analysis are provided in Appendix B. We specifically measured memory usage and total training time on the ogbn-arxiv and ogbn-products datasets using various batch sizes. The results show that REST achieves a running time similar to GraphSAGE while offering superior performance and significantly lower memory usage. In comparison to GAS and VR-GCN, REST enhances performance and accelerates the training process while maintaining comparable memory costs.

\subsection{4.4 Convergence analysis} 
We present a comparison of convergence analysis by showing test accuracy with respect to the running time for both GAS and our algorithms on the ogbn-arxiv and ogbn-products datasets with different batch sizes. Specifically, we represent small and large batch sizes using 5 and 10 clusters for both datasets. The convergence curves are shown in Figures \ref{fig:converge_arxiv_2} and \ref{fig:converge_products_2}, respectively. The results illustrate that our model not only achieves superior performance but also converges faster. This advantage is more evident when there is significant staleness, such as when smaller batch sizes are used or with larger datasets as shown in the first subfigures in Figures \ref{fig:converge_arxiv_2} and \ref{fig:converge_products_2}. Importantly, the convergence of GAS is significantly influenced by staleness. However, our model's convergence speed only experiences a slight decrease as staleness becomes larger, which supports the convergence advantage discussed in introduction. Compared to REST, REST-IS appears to be more stable after convergence. We conclude that this is attributed to that we continually refresh the embeddings of important nodes. All observations indicate that our algorithm possesses advantages in convergence beyond mere performance metrics. We also conducted similar experiments demonstrating the evolution of convergence curves with epochs using GAS and VR-GCN in Appendix D. We can draw the same conclusion that convergence is accurately achieved by our algorithm, regardless of the historical embedding methods used. Please also refer to Appendix K for SAGE.

\subsection{4.5 Ablation Study}
\label{sec:ablation}

In this subsection, we provide detailed ablation studies on the impacts of hyperparameter settings, including batch size utilized for memory table update, frequency $F$, and various aggregation layers.

\noindent\textbf{Forward batch size $B$ for memory table update.} In addition to the flexible adjustment of the updating frequency $f$, our algorithm offers another advantage: the batch size during the memory table updating process can also be reasonably increased since it does not require significant memory for backward propagation. please refer Appendix O. The presented results indicate that a larger batch size used in updating the memory table has the potential to further enhance performance by reducing staleness.

\noindent\textbf{Forward computation frequency $F$.} 
We also evaluate the performance under different frequencies on the ogbn-products dataset in Appendix C. We observe that the performance gradually increases with the updating frequency and higher frequencies tend to enhance convergence, validating
the conclusion that staleness is a key factor affecting performance and our algorithm demonstrates effectiveness in addressing this issue.

\noindent\textbf{Other Studies.} 
Due to the space limit, we present the ablation study as follows in Appendix:
(1) Comparison with more baselines in Appendix F;
(2) Memory persistence and Embedding approximation errors of REST in Appendix G;
(3) Various aggregation layers in Appendix H;
(4) Difference between REST and other techniques of staleness reduction in Appendix L.
\section{5. Conclusion}
\label{conclu}
In this paper, we first conduct a highly comprehensive study of existing historical embedding methods and conclude that the primary reason for their inferior performance and slow convergence originates from staleness. Instead of merely attempting to control staleness, we aim to address this issue at its source. Through both theoretical and empirical analyses, we identify the root cause as the disparity in the frequency of updating the memory table and model parameters. Then, we propose a simple yet highly effective algorithm by decoupling forward and backward propagations and executing them at different frequencies. Comprehensive experiments demonstrate its superior prediction performance, faster convergence, and high flexibility on large-scale graph datasets.

\bibliography{aaai25}

\appendix
\onecolumn

\setcounter{section}{0}
\section{A. Proof of Theorem 1}
\label{app:proof}

\setcounter{theorem}{0}

\begin{theorem}[Approximation Error] 
Consider a L-layers GNN $f_\theta^{(l)}(h)$ with Lipschitz constant $\alpha^{(l)}$, UPDATE$^{(l)}_\theta$ function with Lipschitz constant $\beta$, l =1, \dots, L. $\nabla L_{\theta}$ has  Lipschitz constant $\varepsilon$. If $~\forall v\in V $, $||\Bar{h}_v^{(l)} - h_v^{(l)}||$ denotes the staleness between the  historical embeddings and the true embeddings from full batch aggregations, $\cN(v)$ is the neighborhood set of $v$, then the approximation error of gradients is bounded by:

\begin{equation*}
||\nabla L_{\theta}(\Tilde{h}^{(L)}_v) - \nabla L_{\theta}(h^{(L)}_v)||\leq \varepsilon \sum_{l=1}^{L}(\prod_{k=l+1}^{L}\alpha^{(k)}\beta|\cN(v)|*||\Tilde{L}_{v,}||*||\Bar{h}^{(k-1)} - h^{(k-1)}||).\\
\end{equation*}
\end{theorem}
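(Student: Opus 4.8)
The plan is to bound the gradient discrepancy in two stages: first push the error through the loss gradient using its Lipschitz property, and then control the resulting final-layer embedding error by a layer-by-layer recursion that unrolls into the claimed product-sum structure.

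First I would invoke the Lipschitz continuity of $\nabla L_\theta$ to write
\[
||\nabla L_\theta(\tilde h^{(L)}_v) - \nabla L_\theta(h^{(L)}_v)|| \le \varepsilon\, ||\tilde h^{(L)}_v - h^{(L)}_v||,
\]
which peels off the leading factor $\varepsilon$ and reduces everything to estimating the final-layer embedding error $\Delta^{(L)}_v := ||\tilde h^{(L)}_v - h^{(L)}_v||$. The remaining work is entirely about how this embedding error is built up through the $L$ message-passing layers.

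Next I would establish a one-step recursion for $\Delta^{(l)}_v$. The approximate embedding $\tilde h^{(l)}_v$ departs from the exact $h^{(l)}_v$ only through the aggregated neighbor input: out-of-batch true features $h^{(l-1)}_u$ are replaced by stale ones $\bar h^{(l-1)}_u$, while the in-batch branch carries the error inherited from the previous layer. Applying the Lipschitz constant $\alpha^{(l)}$ of $f_\theta^{(l)}$ and $\beta$ of the UPDATE step, then expanding the weighted aggregation $\sum_{u\in\cN(v)} \tilde L_{vu}(\cdot)$ with the triangle inequality and bounding each of the $|\cN(v)|$ entries by $||\tilde L_{v,}||$, I expect a split of the form
\[
\Delta^{(l)}_v \;\le\; \alpha^{(l)}\,\Delta^{(l-1)}_v \;+\; \beta\,|\cN(v)|\,||\tilde L_{v,}||\,||\bar h^{(l-1)} - h^{(l-1)}||,
\]
cleanly separating a \emph{propagation} term (the inherited error scaled by $\alpha^{(l)}$) from a freshly \emph{injected} staleness term at layer $l$, with base case $\Delta^{(0)}_v = 0$ since the input features are exact.

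Finally I would unroll this recursion from $l=L$ down to $l=1$. The staleness injected at layer $l$ is transmitted to the top through the accumulated factor $\prod_{k=l+1}^{L}\alpha^{(k)}$, and summing these contributions across all layers yields $\sum_{l=1}^{L}\big(\prod_{k=l+1}^{L}\alpha^{(k)}\big)\beta|\cN(v)|\,||\tilde L_{v,}||\,||\bar h^{(l-1)} - h^{(l-1)}||$; multiplying through by $\varepsilon$ from the first step gives the stated bound. I expect the main obstacle to be the bookkeeping in this recursion: one must justify at each layer that only the out-of-batch neighbors contribute the stale features $\bar h$ while the in-batch branch is governed entirely by the lower-layer error $\Delta^{(l-1)}_v$, so that a single staleness norm faithfully dominates the aggregate neighbor discrepancy, and then verify that the telescoping of the $\alpha^{(k)}$ factors reproduces exactly the index range $k=l+1,\dots,L$ in the product. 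A secondary subtlety is the order of the $\alpha^{(l)}$ and $\beta$ maps relative to the aggregation, which determines whether the injected term acquires an extra $\alpha^{(l)}$ factor and hence fixes the precise product index.
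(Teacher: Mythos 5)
Your proposal is correct and is essentially the paper's own argument: the paper also peels off the factor $\varepsilon$ via the Lipschitz property of $\nabla L_\theta$, decomposes $\|\Tilde{h}^{(L)}_v - h^{(L)}_v\|$ one layer at a time (its explicit telescoping sum is exactly the unrolled form of your recursion $\Delta^{(l)}_v \le \alpha^{(l)}\Delta^{(l-1)}_v + \beta|\cN(v)|\,\|\Tilde{L}_{v,}\|\,\|\Bar{h}^{(l-1)} - h^{(l-1)}\|$), and bounds each injected term through the update/aggregation structure with the same $\beta|\cN(v)|\,\|\Tilde{L}_{v,}\|$ factors. The only difference is presentational (recursion-then-unroll versus telescoping-then-bound, and applying the $\varepsilon$ step first rather than last), which changes nothing mathematically.
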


\begin{proof}  
Suppose $\Tilde{f}_\theta^{(l)}$ is a historical embedding-based GNN with L-layers, then the whole GNN model can be defined as $ \Tilde{h}^{(L)} = \Tilde{f}_\theta^{(L)}\circ\Tilde{f}_\theta^{(L-1)}\circ \dots \circ\Tilde{f}_\theta^{(1)}$, similarly, the full batch GNN can be defined as:
$h^{(L)} = f_\theta^{(L)}\circ f_\theta^{(L-1)}\circ \dots \circ f_\theta^{(1)}$, then:

\begin{align}
||\Tilde{h}^{(L)} - h^{(L)}|| &= ||\Tilde{f}_\theta^{(L)}\circ\Tilde{f}_\theta^{(L-1)}\circ \dots \circ\Tilde{f}_\theta^{(1)} - f_\theta^{(L)}\circ f_\theta^{(L-1)}\circ \dots \circ f_\theta^{(1)}||\\
&= ||\Tilde{f}_\theta^{(L)}\circ\Tilde{f}_\theta^{(L-1)}\circ \dots \circ\Tilde{f}_\theta^{(1)} - \Tilde{f}_\theta^{(L)}\circ\Tilde{f}_\theta^{(L-1)}\circ \dots \circ f_\theta^{(1)} \\
&\qquad + \Tilde{f}_\theta^{(L)}\circ\Tilde{f}_\theta^{(L-1)}\circ \dots  \circ\Tilde{f}_\theta^{(2)} \circ f_\theta^{(1)} - \Tilde{f}_\theta^{(L)}\circ\Tilde{f}_\theta^{(L-1)}\circ \dots \circ f_\theta^{(2)} \circ f_\theta^{(1)} - \dots\\
&\qquad + \Tilde{f}_\theta^{(L)}\circ f_\theta^{(L-1)}\circ \dots \circ f_\theta^{(1)} - f_\theta^{(L)}\circ f_\theta^{(L-1)}\circ \dots \circ f_\theta^{(1)}||\\
&\leq ||\Tilde{f}_\theta^{(L)}\circ\Tilde{f}_\theta^{(L-1)}\circ \dots \circ\Tilde{f}_\theta^{(1)} - \Tilde{f}_\theta^{(L)}\circ\Tilde{f}_\theta^{(L-1)}\circ \dots \circ f_\theta^{(1)}|| + \\
&\qquad \dots + ||\Tilde{f}_\theta^{(L)}\circ f_\theta^{(L-1)}\circ \dots \circ f_\theta^{(1)} - f_\theta^{(L)}\circ f_\theta^{(L-1)}\circ \dots \circ f_\theta^{(1)}||\\
&=\sum_{k=1}^{L}\left(\prod_{l=k+1}^{L}\alpha^{(l)}||\Tilde{f}_\theta^{(k)}\circ f_\theta^{(k-1)}\circ \dots \circ f_\theta^{(1)} - f_\theta^{(k)}\circ f_\theta^{(k-1)}\circ \dots \circ f_\theta^{(1)}||\right)
\end{align}

According to the rule of message passing, assuming update function, aggregation function and message passing function at $k$-th layer is denoted by $g_{update}^{(k)}$, $g_{agg}^{(k)}$ and $g_{msp}^{(k)}$, respectively, then:
\begin{align}
& ||\Tilde{f}_\theta^{(k)}\circ f_\theta^{(k-1)}\circ \dots \circ f_\theta^{(1)} - f_\theta^{(k)}\circ f_\theta^{(k-1)}\circ \dots \circ f_\theta^{(1)}|| \\
& \begin{aligned}
= & ||g_{\text{update}}^{(k)}\left(h^{(k-1)}_{v}, g_{\text{agg}}^{(k)}\left(g_{\text{msp}}^{(k)}(\Bar{h}^{(k-1)})\right)\right) - g_{\text{update}}^{(k)}\left(h^{(k-1)}_{v}, g_{\text{agg}}^{(k)}\left(g_{\text{msp}}^{(k)}(h^{(k-1)})\right)\right)||\\
\leq & \beta *||\sum_{\cN(v)}\Tilde{L}_{v,}\Bar{h}^{(k-1)} - \sum_{ \cN(v)}\Tilde{L}_{v,}h^{(k-1)}||\\
\leq & \beta |\cN(v)|*||\Tilde{L}_{v,}\Bar{h}^{(k-1)} - \Tilde{L}_{v,}h^{(k-1)}||\\
\leq & \beta |\cN(v)|*||\Tilde{L}_{v,}||*||\Bar{h}^{(k-1)}-h^{(k-1)}||
\end{aligned}
\end{align}

From $||\nabla L_{\theta}(\Tilde{h}^{(L)}_v) - \nabla L_{\theta}(h^{(L)}_v)||\leq \varepsilon||\Tilde{h}^{(L)}_{v}-h_{v}^{(L)}||$, we can get final conclusion by combining all previous steps together:
\begin{align}
||\nabla L_{\theta}(\Tilde{h}^{(L)}_v) - \nabla L_{\theta}(h^{(L)}_v)||\leq \varepsilon \sum_{l=1}^{L}(\prod_{k=l+1}^{L}\alpha^{(k)}\beta|\cN(v)|*||\Tilde{L}_{v,}||*||\Bar{h}^{(l-1)} - h^{(l-1)}||)
\end{align}
\end{proof}

\setcounter{section}{1}
\section{B. Efficiency Analysis}
\label{app:efficiency}
\textbf{Memory usage and running time.} 
To verify the efficiency of our proposed approach, we provide empirical efficiency analysis compared with one scalable GNNs, GraphSAGE and two historical embedding methods, VR-GCN and GAS in Table~\ref{tab:memory_gas} and Table~\ref{tab:memory_vr}. Specifically, we measure the memory usage and total running time for training process on the ogbn-arxiv and ogbn-products under different batch sizes. Note that, the number under “Batch Size” indicates the number of sampled clusters in one batch in Table~\ref{tab:memory_gas}. We use a server with 8 A6000 GPUs, and all the experiments are run on a single GPU. To ensure a fair comparison, we employ the same official implementations for all baseline methods \cite{fey2021gnnautoscale, shi2023lmc} as well as our proposed method. Although some integration of additional optimization techniques could potentially lead to even faster running times, such as better sampling strategy\cite{balin2023layer}, it's not the primary focus of this paper, and our work can be seamlessly integrated with them. For GAS, we use the APPNP with 5 aggregation layers as the GNN backbone and keep all other hyperparameters the same to make a fair comparison. We also include the performance on the same table for the convenience. 

It’s important to note that GraphSAGE may still encounter the neighbor explosion problem, leading to out-of-memory (OOM) errors for ogbn-products and
significantly higher memory costs for ogbn-arxiv in our experiments. REST demonstrates a comparable running time to GraphSAGE while achieving superior performance and significantly lower memory costs. Compared to GAS and VR-GCN, REST can maintain similar memory costs while achieving better performance and expediting the training process. This is attributed to REST’s faster convergence, which requires substantially fewer epochs to converge, despite a longer per-epoch runtime due to multiple forward passes.

\begin{table*}[ht!]
\vspace{-0.1in}
\caption{Memory usage (MB) and running time (seconds) on ogbn-arxiv and ogbn-products.}
\vspace{-0.2in}
\label{tab:memory_gas}
\begin{center}
\setlength{\tabcolsep}{2pt}
\resizebox{1\linewidth}{!}{%
\begin{tabular}{c|c|c|cccc|cccc|cccc|cc}
\toprule
\multirow{2}{*}{\textbf{Dataset}} & \multirow{2}{*}{\textbf{Parts}} & \multirow{2}{*}{\textbf{Batch Size}}  & \multicolumn{4}{c}{\textsc{Memory(MB)}} & \multicolumn{4}{c}{\textsc{Time(s)}} & \multicolumn{4}{c}{\textsc{Accuracy}} \\
&&& \textsc{SAGE} & \textsc{GAS} & \textsc{REST} & \textsc{REST-IS} & \textsc{SAGE} & \textsc{GAS} & \textsc{REST} & \textsc{REST-IS} & \textsc{SAGE} & \textsc{GAS} & \textsc{REST} & \textsc{REST-IS} \\
\midrule
\multirow{2}{*}{ogbn-products} &\multirow{2}{*}{40}
& 5 & OOM & 8913 & 9295 & 10059 & N/A & 2600 & 1170 & 1312 &N/A & 75.2 & 79.7 & 80.5\\
& & 10 & OOM & 13406 & 13495  & 14753 & N/A & 1890 & 940 & 1200 &N/A & 76.9& 80.0 & 80.4\\
\midrule
\multirow{3}{*}{ogbn-arxiv} &\multirow{3}{*}{40}
& 5 & 3011 & 790 & 837 & 703 & 39.3 & 67.5 & 39 & 42 & 70.9 & 69.4 & 71.7& 72.1\\
& & 10 & 3156 & 997 & 1115 & 948 & 37.2 & 54.0 & 31.5 & 33 & 71.2 & 70.1 & 72.3 & 72.4\\
&& 20 & 3323 & 1486 & 1505  & 1262 & 25.8 & 39.6 & 24.0 & 19.5 & 71.5 &71.5 & 72.4 & 72.4\\
\bottomrule
\end{tabular}
}
\vspace{-0.1in}
\end{center}
\end{table*}

\begin{table*}[h]
\caption{Memory usage (MB) and running time (seconds) on ogbn-products.}
\vspace{-0.2in}
\label{tab:memory_vr}
\begin{center}
\setlength{\tabcolsep}{2pt}
\resizebox{0.8\linewidth}{!}{%
\begin{tabular}{c|ccc|ccc|ccc}
\toprule
 \mc{1}{c}{\mr{2}{\textbf{Batch Size}}} & \multicolumn{3}{c}{\textsc{Memory(MB)}} & \multicolumn{3}{c}{\textsc{Time(s)}} & \multicolumn{3}{c}{\textsc{Accuracy(\%)}}\\
 & \textsc{VR-GCN} & \textsc{REST} & \textsc{REST-IS} & \textsc{VR-GCN} & \textsc{REST} & \textsc{REST-IS} & \textsc{VR-GCN} & \textsc{REST} & \textsc{REST-IS} \\
\midrule
10000 & 8730 & 8733  & 9119 & 2892 & 2616 & 2275 & 76.3 & 77.4 & 78.0\\
50000 & 15402 & 17585 & 16322 & 1104 & 936 & 920 & 76.4 & 77.5 & 78.1\\
\bottomrule
\end{tabular}
}
\end{center}
\end{table*}

\setcounter{section}{2}
\section{C. Forward computation frequency $F$}
\label{app:frequency_anslysis}
\textbf{Forward computation frequency $F$.} 
We evaluate the performance under different frequencies on the ogbn-products dataset. We select the case with 5 clusters (40 clusters in total) for APPNP and report the performance in Figure~\ref{fig:frequency}. We observe that the performance gradually increases with the updating frequency, validating the conclusion that staleness is a key factor affecting performance and our algorithm demonstrates effectiveness in addressing this issue.

Furthermore, we also present the efficiency analysis with different frequency F in Table \ref{tab:efficiency_frequency}, accompanied by convergence curves corresponding to different frequencies in Figure \ref{fig:f_epoch} and \ref{fig:f_time}. As observed from the results, higher frequencies tend to enhance convergence, as shown in Figure \ref{fig:f_epoch} (epoch as unit). However, they also entail extra computation overhead. Hence, the actual convergence time remains relatively consistent across different frequencies, as illustrated in Figure \ref{fig:f_time} (time as unit). Nevertheless, all cases exhibit significant improvements compared to GAS.

\begin{table}[h!]
\caption{Memory usage (MB) and running time (seconds) with different frequency.}
\label{tab:efficiency_frequency}
\begin{center}
\setlength{\tabcolsep}{2pt}
\resizebox{0.5\linewidth}{!}{%
\begin{tabular}{c|c|c|c|cc}
\toprule
{\textbf{Dataset}} &{\textbf{Freduency}} &{\textsc{Memory(MB)}} &{\textsc{Time(s)}} \\
\midrule
\multirow{4}{*}{ogbn-products} 
& 2 & 9295  & 1053 \\
& 3 & 9295  &  1188\\
& 4 & 9295  &   1200\\
& 5 & 9295  &  1204\\
\bottomrule
\end{tabular}
}
\end{center}
\vspace{-0.1in}
\end{table}

\begin{figure}[!h]
  \centering
  \includegraphics[width=0.42\textwidth]{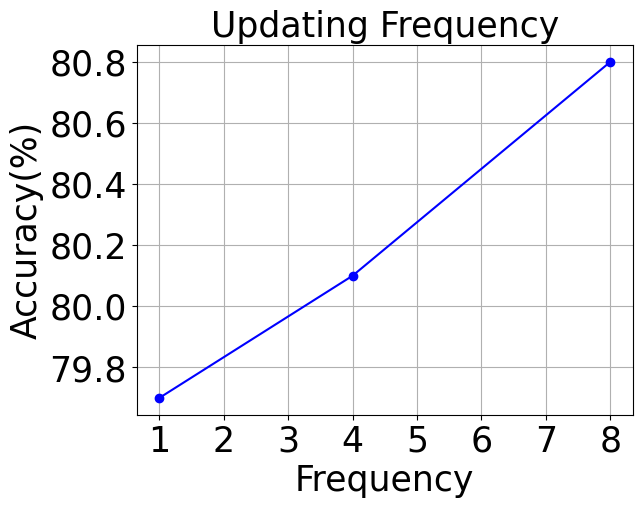}
  \caption{Different Frequency F}
  \label{fig:frequency}
\end{figure}

\begin{figure*}[!h]
  \begin{minipage}[t]{0.5\textwidth}
    \centering
    \includegraphics[width=0.7\textwidth]{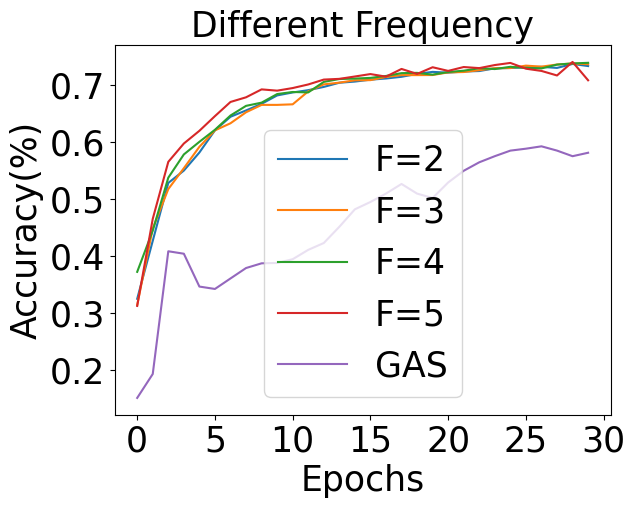}
    \caption{Convergence w.r.t epochs}
    \label{fig:f_epoch}
  \end{minipage}
  \hspace{-0.5in}
  \begin{minipage}[t]{0.5\textwidth}
    \centering
    \includegraphics[width=0.7\textwidth]{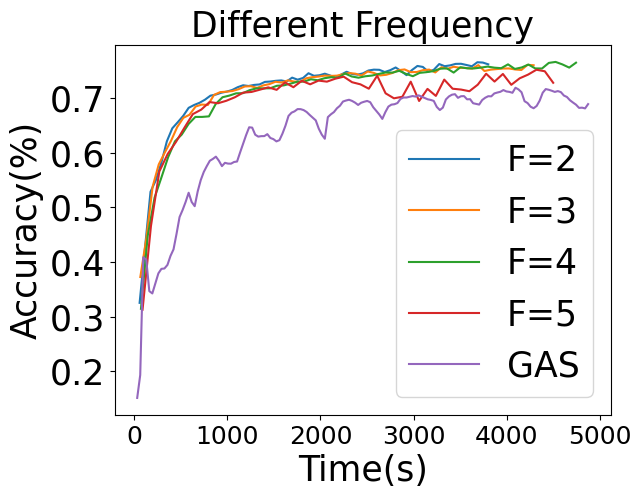}
    \caption{Convergence w.r.t time}
    \label{fig:f_time}
  \end{minipage}
\end{figure*}

\setcounter{section}{3}
\section{D. Convergence}
\label{app:converge}
We replicated the convergence analysis outlined in our main paper (Figure \ref{fig:converge_arxiv_2} and Figure \ref{fig:converge_products_2}), using VR-GCN and GAS, but with epochs as the x-axis. The results are shown in Figure \ref{fig:converge_arxiv} and \ref{fig:converge_products} for GAS and Figure \ref{fig:converge_arxiv_vr} and \ref{fig:converge_products_vr} for VR-GCN. GAS settings remained consistent with those specified in the main paper. For VR-GCN, we experimented with batch sizes of 128 and 2048 for the ogbn-arxiv dataset, and 10000 and 50000 for ogbn-products. For GAS, our findings align with those of the main paper:  our algorithm not only achieves superior performance but also converges more rapidly. For VR-GCN, when dealing with small datasets with large batch sizes, VR-GCN exhibits comparable convergence to our approach, as staleness isn't a significant factor. However, as the dataset size increases or the batch size decreases, VR-GCN's convergence deteriorates significantly, whereas our algorithm maintains robust convergence. Furthermore, after convergence, REST-IS demonstrates greater resilience than REST in scenarios where staleness plays a significant role.

\begin{figure*}[!ht]
  \begin{minipage}[t]{0.53\textwidth}
    \centering
    \includegraphics[width=0.45\textwidth]{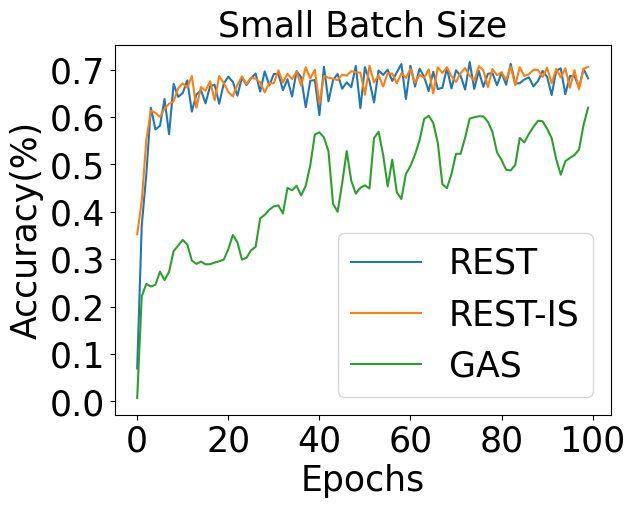}
    \includegraphics[width=0.45\textwidth]{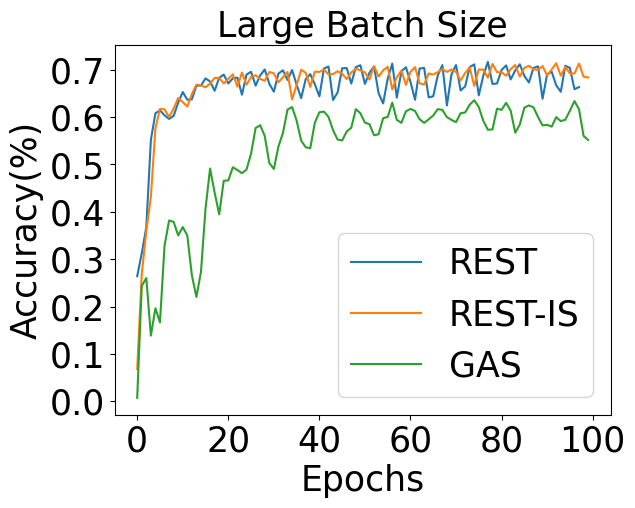}
    \caption{ogbn-arxiv\hspace{5in}}
    \label{fig:converge_arxiv}
  \end{minipage}
  \hspace{-0.8in}
  \begin{minipage}[t]{0.53\textwidth}
    \centering
    \includegraphics[width=0.45\textwidth]{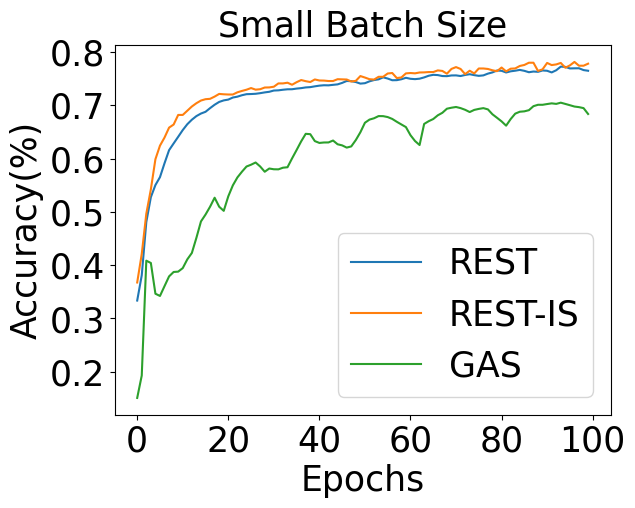}
    \includegraphics[width=0.45\textwidth]{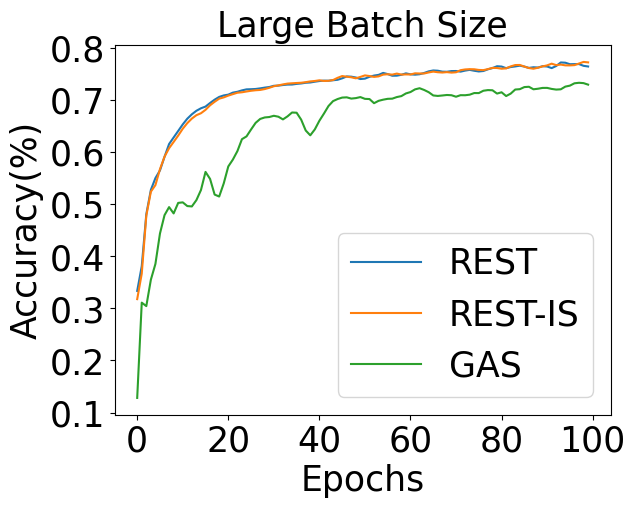}
    \hspace{-1.2in}
    \caption{ogbn-products\hspace{-3in}}
    \label{fig:converge_products}
  \end{minipage}
\end{figure*}

\begin{figure*}[!ht]
  \begin{minipage}[t]{0.53\textwidth}
    \centering
    \includegraphics[width=0.45\textwidth]{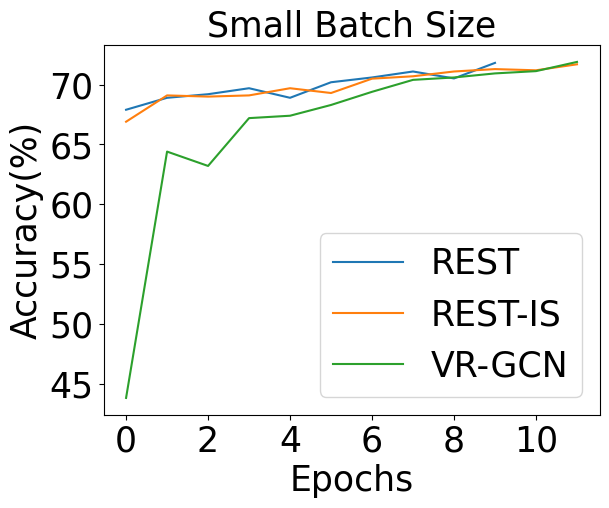}
    \includegraphics[width=0.45\textwidth]{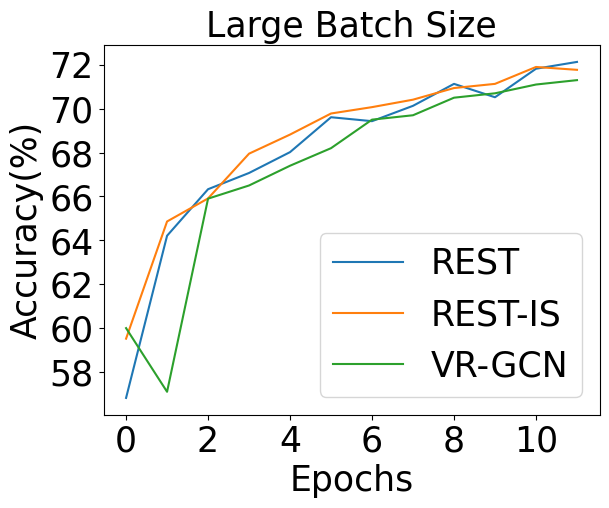}
    \caption{ogbn-arxiv}
    \label{fig:converge_arxiv_vr}
  \end{minipage}
  \hspace{-0.8in}
  \begin{minipage}[t]{0.53\textwidth}
    \centering
    \includegraphics[width=0.45\textwidth]{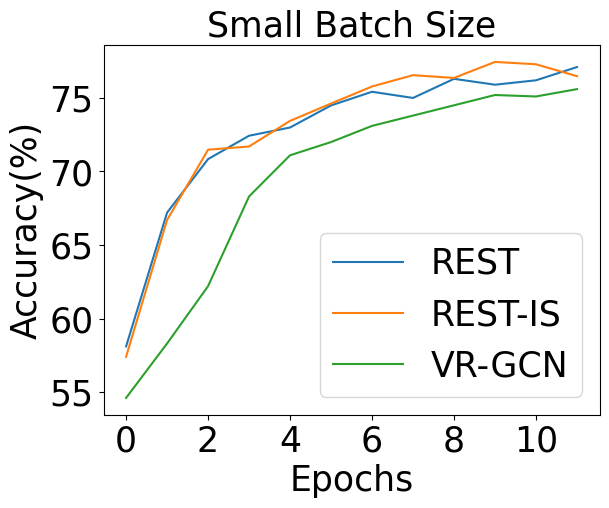}
    \includegraphics[width=0.45\textwidth]{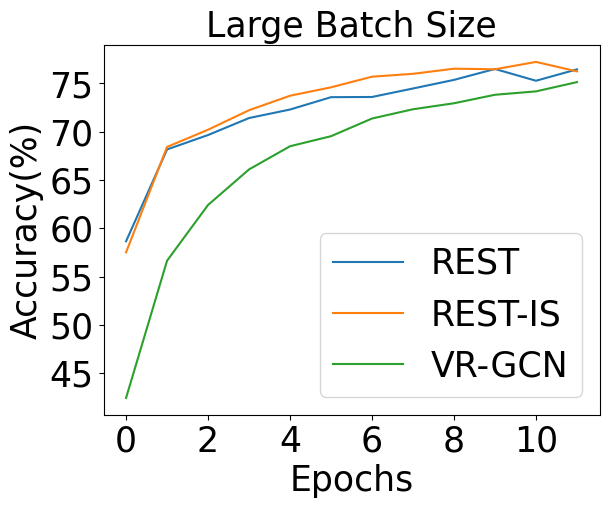}
    \hspace{-1.2in}
    \caption{ogbn-products\hspace{-1in}}
    \label{fig:converge_products_vr}
  \end{minipage}
\end{figure*}

\setcounter{section}{4}
\label{app:LMC}
\section{E. Bidirectional REST}

While existing historical embedding methods like GAS and GraphFM consider all available neighborhood information, they fail to aggregate gradient information from these neighbors because they reside in memory and are excluded from computation graphs. One straightforward solution is to maintain an additional memory per layer to store historical gradients, similar to the forward pass in GAS. However, backward propagation encounters the same bottleneck of staleness. In a similar vein, we propose using REST to increase the frequency of gradient memory updating, thereby reducing the staleness of gradients. 

Specifically, we could choose a frequency $\tilde{F}$ where $\tilde{F} \leq F$, indicating that we can easily select $\tilde{F}$ out of $F$ batches for gradient computation during the execution of line 7 in Algorithm ~\ref{alg}, updating the historical gradient memory without altering model parameters. While historical gradients can mitigate bias to some extent, their impact is limited, as demonstrated in LMC~\cite{shi2023lmc}. Therefore, we prioritize REST during forward propagation in the main paper and only highlight its potential to further enhance LMC in this section.

Specifically, akin to the forward pass, we conduct backward propagation $\tilde{F}$ times to refresh the gradient memory bank without updating model parameters. Subsequently, we execute a standard backward propagation to update the model parameters (line 10 in Algorithm~\ref{alg}). We include additional analysis in this scenario, including efficiency analysis in Table \ref{tab:efficiency_lmc}. We set $\tilde{F}$=1 and use GCN as backbone for simplicity. For LMC, we adhere to their experiment settings as outlined in their official repository. Our proposed method enhances performance, accelerates the training process, and maintains a comparable memory cost. All results underscore the versatility and benefits of our approach.

\begin{table}[ht!]
\caption{Memory usage (MB) and running time (seconds) on ogbn-arxiv and ogbn-products.}
\label{tab:efficiency_lmc}
\begin{center}
\setlength{\tabcolsep}{2pt}
\resizebox{0.8\linewidth}{!}{%
\begin{tabular}{c|ccc|cccc|}
\toprule
\multirow{2}{*}{\textbf{Models}}  & \multicolumn{3}{c}{\textsc{ogbn-arxiv}} & \multicolumn{3}{c}{\textsc{ogbn-products}} \\
& \textsc{Acc} & \textsc{Memory} & \textsc{Running time}  & \textsc{Acc} & \textsc{Memory} & \textsc{Running time}  \\
\midrule
\multirow{1}{*}{LMC} 
& 71.4 & 558 & 66 & 77.5  & 10982 & 1520 \\
\multirow{1}{*}{LMC+REST} 
& 72.6 & 584 & 41 & 80.1  & 11139 & 925 \\

\bottomrule
\end{tabular}
}
\end{center}
\end{table}

\setcounter{section}{5}
\label{app:recent}
\section{F. More Recent Baselines}

Several historical embedding methods have been proposed recently, including LMC \cite{shi2023lmc}, which incorporates a memory table for neighbors' gradients, and Refresh \cite{huang2023refresh}, which utilizes staleness scores and gradient changes as metrics to selectively update the memory table. However, as discussed in related works in Appendix M \ref{app:related}, these methods still struggle to address the staleness issue at its source, and their techniques are orthogonal to ours. Therefore, to provide a comprehensive analysis, we have included a performance comparison between REST, Refresh, and LMC in Table \ref{tab:performance_lmc}. We employ GCN as the GNN backbone model and include the GAS case in the table for convenience. From the results, we observe that REST can be easily combined with existing works and significantly outperforms them. For example, it achieves a 2.6\% increase over LMC on the ogbn-products dataset.

\begin{table}[ht!]
\caption{Performance Comparison on ogbn-arxiv and ogbn-products.}
\label{tab:performance_lmc}
\begin{center}
\setlength{\tabcolsep}{2pt}
\resizebox{0.5\linewidth}{!}{%
\begin{tabular}{c|ccc|cccc|}
\toprule
\multirow{1}{*}{\textbf{Models}}  & \multicolumn{1}{c}{\textsc{ogbn-arxiv}} & \multicolumn{1}{c}{\textsc{ogbn-products}} \\
\midrule
\multirow{1}{*}{Refresh} 
& 70.5 & 78.3  \\
\multirow{1}{*}{GAS} 
& 71.7 & 76.7  \\
\multirow{1}{*}{GAS+REST} 
& 72.2 &  79.6 \\
\multirow{1}{*}{LMC} 
& 71.4  & 77.5  \\
\multirow{1}{*}{LMC+REST} 
& 72.6  & 80.1   \\

\bottomrule
\end{tabular}
}
\end{center}
\end{table}
\setcounter{section}{6}
\label{app:prelim}
\section{G. Memory persistence and embedding approximation errors of REST}

We present two additional results on memory persistence and approximation error comparison, similar to those in our preliminary study, between GAS and REST+GAS on the ogbn-arxiv dataset, as follows:

(1) Memory Persistence (Figure \ref{fig:REST_staleness_score}): It is evident that persistence continuously decreases with the increase in updating frequency, regardless of the batch size used. This directly demonstrates that staleness is reduced by REST.

(2) Approximation Error (Figure \ref{fig:REST_embed}): The decrease in error value is also noticeable when compared to the case where REST is utilized. This provides another straightforward evidence showing the effectiveness of REST.

\begin{figure*}[!ht]
  \begin{minipage}[t]{0.53\textwidth}
    \centering
    \hspace{-0.3in}
    \includegraphics[width=0.45\textwidth]{Figures/staleness_score_arxiv.png}
    \includegraphics[width=0.45\textwidth]{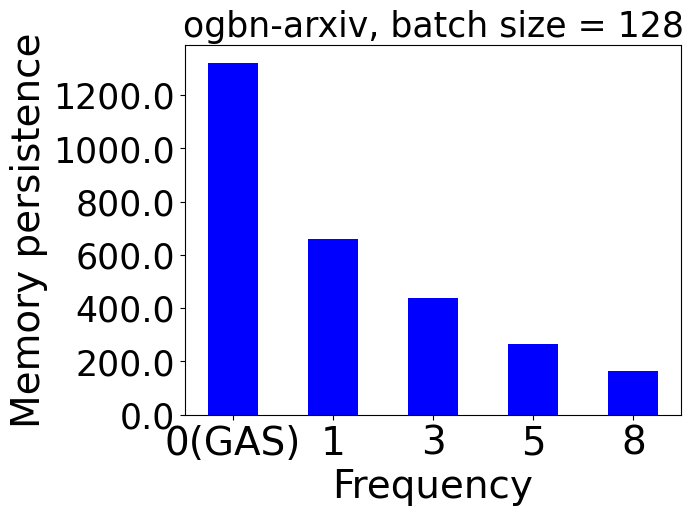}
    \captionsetup{justification=centering,margin={-0.5in,0in}}
    \caption{Left: GAS, Right: REST + GAS.\hspace{-2in}}
    \label{fig:REST_staleness_score}
  \end{minipage}
  \hspace{-1in}
  \begin{minipage}[t]{0.53\textwidth}
    \centering
    \includegraphics[width=0.48\textwidth]{Figures/embed.png}
    \includegraphics[width=0.45\textwidth]{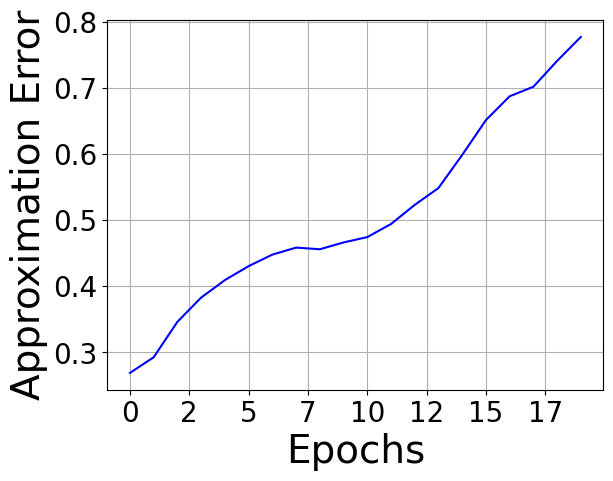}
    \hspace{-1.5in}
    \captionsetup{justification=centering,margin={0.53in,0in}}
    \caption{Left: GAS, Right: REST + GAS}
    \label{fig:REST_embed}
  \end{minipage}
  \vspace{-0.1in}
\end{figure*}

\setcounter{section}{7}
\label{app:layers}
\section{H. Various aggregation layers}

We introduce an additional experiment conducted on the ogbn-arxiv dataset to evaluate the influence of the number of aggregation layers on performance. For simplicity, we opt to employ the APPNP model as the GNN backbone and utilize GAS as the baseline, given its proven superiority among all baselines. We keep all other hyperparameters consistent for a fair comparison and only vary the number of propagation layers. To demonstrate the enhanced capability of our approach in handling staleness, we specifically choose to use 5 clusters (out of a total of 40 clusters) for this ablation study, which corresponds to a high staleness situation.
From Table~\ref{tab:layers}, we can readily observe that the performance of both GAS and our algorithm drops after stacking more layers. This result verifies the conclusion in Theorem~\ref{thm:approximation}: the approximation error between the historical embeddings and the full batch embeddings at each layer accumulates. However, the performance of GAS deteriorates significantly when adding more layers. In contrast, our model is not impacted to the same extent since we significantly reduce the approximation error at each layer. This demonstrates another advantage of our algorithm when deeper GNNs are utilized.

\begin{table}[ht!]
\caption{Prediction accuracy (\%) with a varying number of propagation layers on ogbn-arxiv.}
\label{tab:layers}
\begin{center}
\resizebox{0.5\linewidth}{!}{%
\begin{tabular}{lccc}
\toprule
\textbf{Layers} & \textsc{GAS} & \textsc{GAS+REST}   & \textsc{GAS+REST-IS} \\
\midrule
L=1 &68.2 &71.9 & 72.0\\
L=3 & 69.4 & 71.8 & 72.3\\
L=5 & 70.1 &72.5 & 72.5 \\
L=8 &69.4 &72.4 &72.4\\
L=10 &69.4 &72.4 &72.3\\
\bottomrule
\end{tabular}
}
\end{center}
\vspace{-0.25in}
\end{table}

\setcounter{section}{8}
\label{app:FM}
\section{I. More analysis on GraphFM}

\textbf{Approximation errors} In this section, we provide more comprehensive analysis on another important baseline, GraphFM. Regarding approximation errors in Figure \ref{fig:error_fm_arxiv} and \ref{fig:error_fm_products}, GraphFM also experiences staleness accumulation between each layer. While it can alleviate the
approximation error introduced by staleness, it incurs additional approximation
error by using biased one-hop neighbors for the momentum combination. This
occurs because these nodes lose aggregations from their out-of-batch neighbors,
exacerbating the overall approximation error.

\begin{figure*}[!ht]
  \begin{minipage}[t]{0.5\textwidth}
    \centering
    \includegraphics[width=0.6\textwidth]{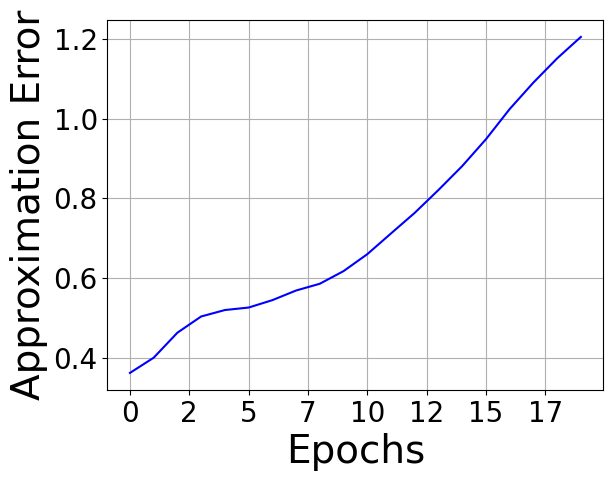}
    \caption{ogbn-arxiv}
    \label{fig:error_fm_arxiv}
  \end{minipage}
  \begin{minipage}[t]{0.5\textwidth}
    \centering
    \includegraphics[width=0.6\textwidth]{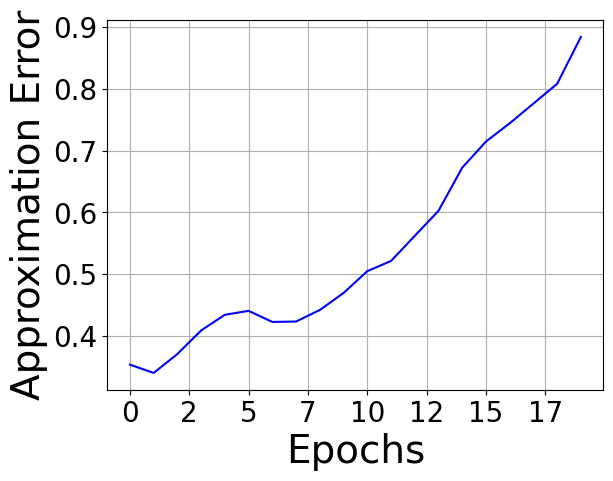}
    \caption{ogbn-products}
    \label{fig:error_fm_products}
  \end{minipage}
\end{figure*}

\noindent\textbf{Performance and frequency} We provide in Table \ref{tab:FM} for the performance of GraphFM+REST. Compared with GAS, GraphFM shows a slight performance improvement by reducing staleness, but it still falls short of achieving superior performance as it doesn't fully address the staleness issue at its source, unlike REST. Conversely, our proposed method can be seamlessly applied to GraphFM, yielding even better performance, underscoring the generality of REST.
Furthermore, the frequency analysis in Table \ref{tab:frequency_fm} indicates that higher frequencies tend to enhance performance, consistent with our claim in the main paper.

\begin{table}[ht!]
\caption{Accuracy ($\%$) improvement for GraphFM.}
\label{tab:FM}
\begin{center}
\begin{small}
\setlength{\tabcolsep}{2pt}
\resizebox{0.6\linewidth}{!}{%
\begin{tabular}{l|c|c|ccccc}
\toprule
    {{\textsc{Dataset}}} &{{\textsc{Backbone}}} &{{\textsc{Parts}}} & {{\textsc{Batch Size}}} & {\textsc{FM}} & \textsc{\textbf{+REST}} & \textsc{\textbf{+REST-IS}} \\
\midrule
\multirow{6}{*}{\textbf{ogbn-products}} & \multirow{2}{*}{\textbf{GCN}} & \multirow{2}{*}{\textbf{70}}              & 5   & 76.3 & 77.9 & 78.0   \\         
                       &  &  & 10   &  76.9 & 79.9 & 78.8   \\

\cline{2-7}
 & \multirow{2}{*}{\textbf{APPNP}} & \multirow{2}{*}{\textbf{40}}        & 5   & 76.2 & 80.2 &  80.6  \\
                      & &  & 10   & 77.1  & 80.3 & 80.6   \\
                        
\cline{2-7}
 & \multirow{2}{*}{\textbf{GCNII}}  & \multirow{2}{*}{\textbf{150}}   & 5   & 75.3
 & 76.2 & 76.6   \\   
 
 & & & 20   & 77.4  & 80.2 & 80.0   \\

\midrule
\multirow{10}{*}{\textbf{ogbn-arxiv}} &\multirow{4}{*}{\textbf{GCN}} &\multirow{4}{*}{\textbf{80}}  & 5   & 68.5  & 71.8 & 72.0   \\
                       &  &  & 10   & 70.5  & 72.0 & 72.4   \\
                       & &  & 20   &  70.9 & 72.2 &  72.5  \\
                       & & & 40   & 71.8  & 72.5 & 72.7   \\

\cline{2-7}
& \multirow{3}{*}{\textbf{APPNP}} & \multirow{3}{*}{\textbf{40}}  & 5   & 70.3  & 72.0 &  72.4  \\
                      &  &  & 10   &  70.5 & 72.2 &  72.4  \\
                      & & & 20   & 71.5  & 72.3 &  72.3  \\
\cline{2-7}
& \multirow{3}{*}{\textbf{GCNII}} & \multirow{3}{*}{\textbf{40}}  & 5   & 70.6  & 72.7 & 72.8   \\
                      &  &  & 10   &  72.0 & 72.7 & 72.8   \\
                      & & & 20   & 73.1 & 73.2 & 73.1   \\
                       
\bottomrule
\end{tabular}
}
\end{small}
\end{center}
\end{table}

\begin{table}[ht!]
\caption{Frequency analysis with GraphFM on ogbn-products.}
\label{tab:frequency_fm}
\begin{center}
\setlength{\tabcolsep}{2pt}
\resizebox{0.35\linewidth}{!}{%
\begin{tabular}{c|c|c|c|cc}
\toprule
{\textbf{Dataset}} &{\textbf{Freduency}} &{\textsc{Acc}} \\
\midrule
\multirow{4}{*}{ogbn-products} 
& 2 &  80.0\\
& 3 & 80.1\\
& 4 & 80.4\\
& 5 & 80.6\\
\bottomrule
\end{tabular}
}
\end{center}
\end{table}

\setcounter{section}{9}
\label{app:papers100m}
\section{J. Performance and Efficiency on Larger Datasets}

In this section, we demonstrate the effectiveness on larger datasets, which typically exhibit more severe staleness problems. Given our limited computational resources, we chose to add experiments on ogbn-papers100M and MAG240M, which are significantly larger in scale compared to other commonly used datasets and are highly representative. Note that for MAG240M, we follow the common practice of homogenizing it \cite{hu2021ogb, jiang2023musegnn}. We report the performance and efficiency in Table \ref{tab:papers100m}. 

The larger size of the ogbn-papers100M and MAG240M dataset exacerbates the staleness issue for GAS, leading to decreased accuracy and slower convergence, as anticipated. In contrast, REST showcases both high accuracy and efficiency, in line with our primary claim in the main paper.

\begin{table}[h!]
\caption{Memory usage (MB) and running time (seconds) on larger datasets.}
\label{tab:papers100m}
\begin{center}
\setlength{\tabcolsep}{2pt}
\resizebox{0.7\linewidth}{!}{%
\begin{tabular}{c|c|c|cc|ccc|cc}
\toprule
{\textbf{Datasets}} & {\textbf{Models}} & {\textbf{Accuracy}} & {\textsc{Memory(MB)}} & {\textsc{Time(s)}} \\
\midrule
\multirow{2}{*}{ogbn-papers100M}
& GAS & 64.9 & 15705 &  8840 \\
& GAS + REST & 67.6 & 16808 & 4100 \\
\midrule
\multirow{2}{*}{MAG240M}
& GAS & 61.2 & 31256 &  42603 \\
& GAS + REST & 67.6 & 32062 & 25355 \\
\bottomrule
\end{tabular}
}
\end{center}
\end{table}

\setcounter{section}{10}
\label{app:sage}
\section{K. Additional results comparing with SAGE}
\textbf{Convergence}
To better illustrate that REST outperforms not only other historical embedding methods but also classical sampling methods such as GraphSAGE, besides performance reported in Table \ref{tab:baseline}, we further include Figures \ref{fig:sage_small} and \ref{fig:sage_large}, showcasing convergence curves on ogbn-arxiv with both small and large batch sizes, utilizing time as the unit of measurement. The results indicate that our model’s convergence is nearly equivalent to that of GraphSAGE.
However, it’s important to note that REST outperforms GraphSAGE in terms of performance and only requires a much smaller memory cost. Hence, REST demonstrates its advantage over GraphSAGE.

\begin{figure*}[!ht]
  \begin{minipage}[t]{0.5\textwidth}
    \centering
    \includegraphics[width=0.7\textwidth]{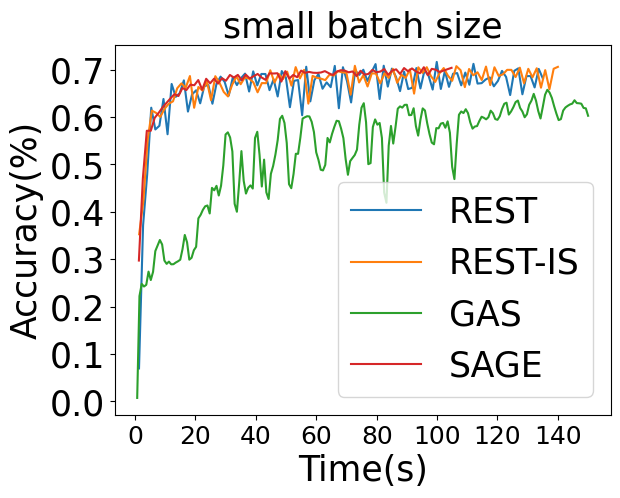}
    \caption{small batch size}
    \label{fig:sage_small}
  \end{minipage}
  \begin{minipage}[t]{0.5\textwidth}
    \centering
    \includegraphics[width=0.7\textwidth]{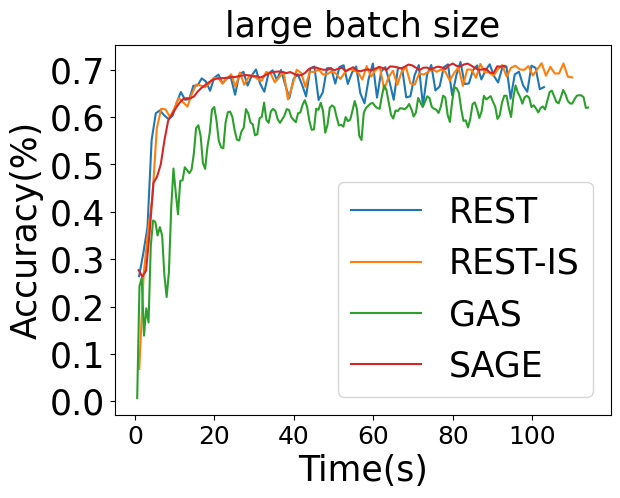}
    \caption{large batch size}
    \label{fig:sage_large}
  \end{minipage}
\end{figure*}

\setcounter{section}{11}
\label{app:lr}
\section{L. Difference between REST and variation of learning rate}

While there are some engineering techniques proposed to reduce feature staleness, such as adding regularization \cite{fey2021gnnautoscale} or reducing the learning rate, they often suffer from many potential issues during the training process. Taking reduced learning rate as an example:

(1) Reduced convergence speed and increased training time: Lowering the learning rate can slow down convergence and prolong training time significantly.

(2) Risk of getting stuck in local minima: Lower learning rates may cause the model to get trapped in local minima for extended periods, hindering overall optimization.

(3) Sensitivity to other hyperparameters: The effectiveness of reduced learning rates can depend heavily on other hyperparameter choices, such as the optimizer used.

In contrast, REST does not encounter the optimization issues associated with engineering techniques like reducing the learning rate. It involves only additional forward passes without changing the optimization process. Furthermore, REST is highly versatile: it can utilize higher frequencies or larger batch sizes to refresh the memory bank more frequently, thereby reducing staleness. Additionally, it can incorporate importance sampling to prioritize the refreshment of embeddings for important nodes, such as REST-IS. It's important to note that REST can achieve these outcomes that traditional engineering techniques cannot accomplish.

To further support our stance, we provide a comparison between REST with different frequencies (F) and simply reducing the learning rate by F. The original learning rate used in our experiment is 0.001. We provide performance, efficiency and convergence curve in Table \ref{tab:lr} and Figures \ref{fig:convergence_lr}, \ref{fig:convergence_lr_2} to comprehensively support our claim. Upon examining the results, it's clear that when comparing F=2 with LR=0.0005 and F=5 with LR=0.0002, REST demonstrates superior performance and faster convergence rates compared to solely reducing the learning rate.

\begin{table}[ht!]
\caption{Memory usage (MB) and running time (seconds) ogbn-products.}
\begin{center}
\label{tab:lr}
\setlength{\tabcolsep}{2pt}
\resizebox{0.6\linewidth}{!}{%
\begin{tabular}{c|c|c|c|cc}
\toprule
{\textbf{ogbn-products}} & hyperparameters &{\textsc{Accuracy}} &{\textsc{Time(s)}} \\
\midrule
\multirow{1}{*}{learning rate} & 0.0002 & 79.7 $\pm$ 0.15 & 2275\\
\multirow{1}{*}{Frequency} & 5 & 80.5 $\pm$ 0.09 & 1204\\
\midrule
\midrule
\multirow{1}{*}{learning rate} & 0.0005 & 79.7 $\pm$ 0.18 & 1950\\
\multirow{1}{*}{Frequency} & 2 & 80.2 $\pm$ 0.11 & 1053\\
\bottomrule
\end{tabular}
}
\end{center}
\end{table}

\begin{figure*}[!ht]
  \begin{minipage}[t]{0.5\textwidth}
    \centering
    \includegraphics[width=0.6\textwidth]{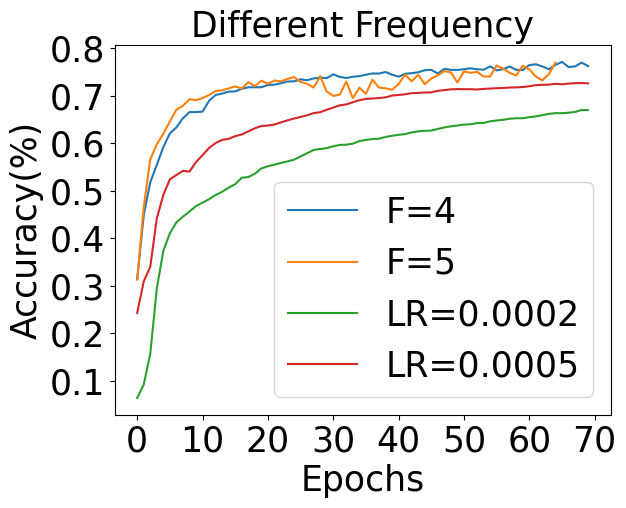}
    \caption{Convergence w.r.t epochs.}
    \label{fig:convergence_lr}
  \end{minipage}
  \begin{minipage}[t]{0.5\textwidth}
    \centering
    \includegraphics[width=0.6\textwidth]{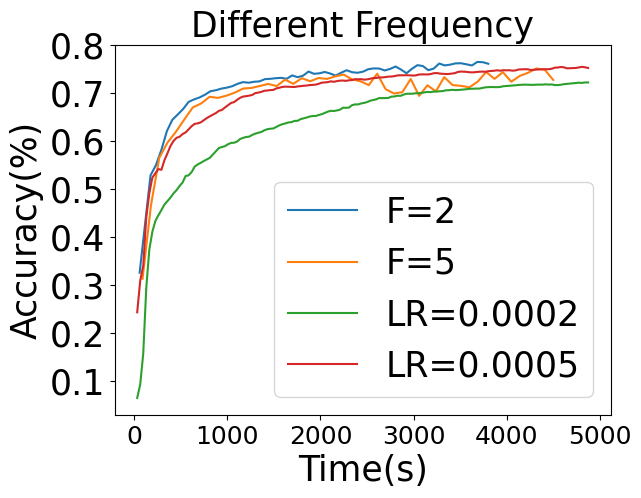}
    \caption{Convergence w.r.t time.} 
    \label{fig:convergence_lr_2}
  \end{minipage}
\end{figure*}

\setcounter{section}{12}
\section{M. Related Work}
\label{app:related}
In this section, we summarize related works on the scalability of large-scale GNNs with a focus on sampling methods.

\textbf{Vanilla sampling methods.} 
Sampling methods involve dropping nodes and edges through the adoption of mini-batch training strategies, effectively reducing computation and memory requirements. In \textit{node-wise sampling}, a fixed number of neighbors are sampled instead of considering all of them, such as GraphSAGE~\cite{hamilton2017inductive}, PinSAGE~\cite{ying2018graph} and GraphFM-IB ~\cite{yu2022graphfm}. However, these methods cannot eliminate but still grapple with the neighbor explosion problem and introduce bias and variance.

\textit{Layer-wise sampling} fixes the sampled neighbors per layer to avoid the neighbor explosion problem. For instance, FastGCN~\cite{chen2018fastgcn} independently samples nodes in each GNN layer using importance sampling. LADIES~\cite{zou2019layer} onsiders the correlation between layers. ASGCN~\cite{huang2018adaptive}
further defines the sampling probability of lower layers based on the upper ones. However, the layer-wise induced adjacency matrix is usually sparser than the others, contributing to its sub-optimal performance. 

Different from the previous methods, \textit{subgraph sampling} involves sampling subgraphs from the entire graph as mini-batches and then constructing a full GNN on those subgraphs. This approach can address the neighbor explosion problem, as only the nodes within the subgraph participate in the computation. For example, ClusterGCN~\cite{chiang2019cluster} first partitions the graph into clusters and then selects a subset of clusters to construct mini-batches. GraphSaint~\cite{Zeng2020GraphSAINT} proposes importance sampling to construct mini-batches through different samplers. However, this method may introduce significant variance because the edges between subgraphs are ignored.

\textbf{Historical embedding methods.}
Some advanced algorithms attempt to use historical embeddings as approximate embeddings instead of true embeddings from the full batch computation. This approach can reduce memory costs by decreasing the number of sampled neighbors, either per hop ~\cite{chen2017stochastic} or in terms of the number of hops~\cite{fey2021gnnautoscale}. VR-GCN~\cite{chen2017stochastic} first proposed this idea of restricting the number of sampled neighborhoods per hop and using historical embeddings for out-of-batch nodes to reduce variance. MVS-GCN~\cite{cong2020minimal} simplified this scheme into a one-shot sampling scenario, where nodes no longer need to recursively explore neighborhoods in each layer. GNNAutoScale~\cite{fey2021gnnautoscale} further restricts the neighbors to their direct one-hop but without discarding any data, enabling it to maintain constant GPU memory consumption. GraphFM-OB~\cite{yu2022graphfm} employs feature momentum to further improve performance.

Although these historical embedding approaches are promising because of their scalability and efficiency, they all suffer from approximation errors originating from feature staleness. 
This issue has become a bottleneck, especially for large-scale datasets, as demonstrated in our preliminary study in the main paper.
Several works have proposed techniques to reduce staleness. For instance, GAS~\cite{fey2021gnnautoscale} concludes the staleness is from the inter-connectivity between batches and uses graph clustering to relieve it and also uses regularization to prevent model parameters change too much. GraphFM-OB~\cite{yu2022graphfm} uses feature momentum of in-batch and out-of-batch nodes for compensation. Refresh~\cite{huang2023refresh} alleviates the staleness issue by establishing staleness criteria. However, these approaches fail to address the fundamental issue, which is the discrepancy in updating frequency between memory tables and model parameters. Besides, their methods are complementary to ours and can be seamlessly incorporated into our algorithm.

\setcounter{section}{13}
\label{app:hypermeters}
\section{N. REST and Gradient Accumulation}

REST aims to refresh the memory bank to reduce staleness, whereas gradient accumulation simulates the effect of increasing the batch size. Hence, gradient accumulation introduces additional computation overhead. In terms of runtime, it prolongs the time since every forward pass requires gradient computation. Regarding memory usage, gradients must be stored for each forward pass until backward propagation is executed with accumulation. This accumulation of gradients across multiple forward passes can result in increased memory consumption, particularly with large models or datasets. To demonstrate the efficiency contrast between REST and gradient accumulation, we present the following Table \ref{tab:gradient}, showcasing the difference and superior efficiency of our design. Note that REST can also be combined with gradient accumulation since they are orthogonal techniques. We leave this as a future work.

\begin{table}[ht!]
\caption{Memory usage (MB) and running time (seconds) on ogbn-products.}
\label{tab:gradient}
\begin{center}
\setlength{\tabcolsep}{2pt}
\resizebox{0.7\linewidth}{!}{%
\begin{tabular}{c|c|c|c|cc}
\toprule
& {\textsc{Memory(MB)}} &{\textsc{Time/Epoch(s)}} \\
\midrule
 REST & 9295  & 33 \\
 Gradient Accumulation & 14537  &  39 \\

\bottomrule
\end{tabular}
}
\end{center}
\end{table}

\setcounter{section}{13}
\section{O. Forward batch size $B$ for memory table update.}
\label{app:bs}
In addition to the flexible adjustment of the updating frequency $F$, our algorithm offers another advantage: the batch size during the memory table updating process (line 7 in Algorithm \ref{alg}) can also be reasonably increased since it does not require significant memory for backward propagation. 
We explore the effect of varying batch size on the final performance using GCN, APPNP, and GCNII as the backbone. We also include GAS for comparison. Specifically, we illustrate three scenarios for the batch size: (1) \textit{Same}: identical to the batch size used for updating the model (consistent with the setting in Table \ref{tab:GAS}); (2) \textit{Half}: half batch (encompassing half of all clusters); and (3) \textit{Full}: full batch (encompassing all clusters (whole graph)). We use frequency $F$ =1 for all cases. We show the results of three cases in Figure~\ref{fig:second_size}. 

The presented results indicate that a larger batch size used in updating the memory table has the potential to further enhance performance by reducing staleness. This is because a large batch size accelerates the memory table updates, which is equivalent to using a higher frequency $F$. However, the improvement is marginal and comes with additional memory costs. Consequently, we opt to maintain the same batch size to keep the memory efficiency of our proposed algorithm. 

\begin{figure*}[!htb]
  \centering
  \begin{minipage}[t]{0.3\textwidth}
    \centering
    \includegraphics[width=\textwidth]{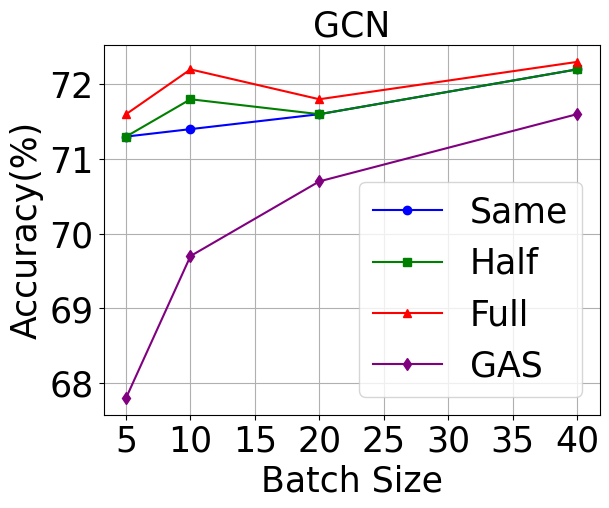}
    \label{fig:gcn}
  \end{minipage}
  \hfill
  \begin{minipage}[t]{0.3\textwidth}
    \centering
    \includegraphics[width=\textwidth]{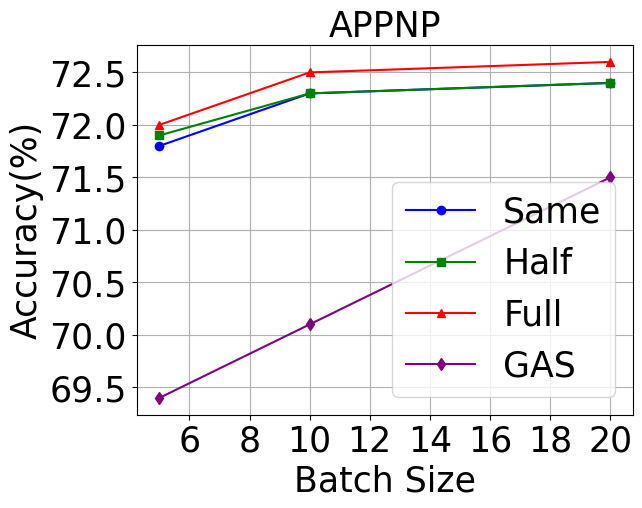}
    \label{fig:appnp}
  \end{minipage}
  \hfill
  \begin{minipage}[t]{0.3\textwidth}
    \centering
    \includegraphics[width=\textwidth]{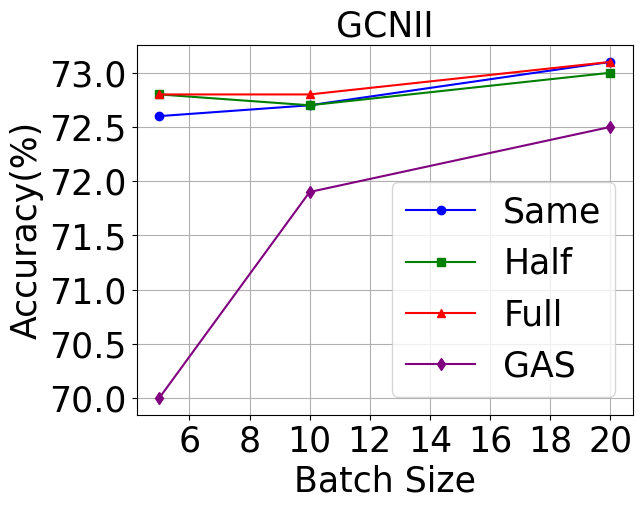}
    \label{fig:gcn2}
  \end{minipage}
  
  \caption{The impact of batch size (a) GCN, (b) APPNP, and (c) GCNII }
  \label{fig:second_size}
  \vspace{-0.1in}
\end{figure*}

\setcounter{section}{14}
\label{app:hypermeters}
\section{P. Hyperparameters Searching Space}
Our model's hyperparameters are tuned from the following search space: (1) learning rate: $\{0.01, 0.001, 0.005\}$;  (2) weight decay: $\{0, 5e-4, 5e-5\}$; (3) dropout: $\{0.1, 0.3, 0.5, 0.7, 0.8\}$; (4) propagation layers : $L \in \{2, 3, 4\}$; (5) MLP layers: $\{3,4\}$; (6) MLP hidden units: $\{128, 256, 512\}$. 

\end{document}